\documentclass[letterpaper]{article} 
\usepackage{aaai2026}  
\usepackage{times}  
\usepackage{helvet}  
\usepackage{courier}  
\usepackage[hyphens]{url}  
\usepackage{graphicx} 
\usepackage{natbib}  
\usepackage{caption} 
\usepackage{algorithm}

\usepackage{newfloat}
\usepackage{listings}
\DeclareCaptionStyle{ruled}{labelfont=normalfont,labelsep=colon,strut=off} 
\floatstyle{ruled}
\newfloat{listing}{tb}{lst}{}
\floatname{listing}{Listing}
\usepackage[noend]{algpseudocode}

\usepackage{amsthm}
\usepackage{amsmath} 
\usepackage{booktabs}
\usepackage{multirow}
\usepackage{amssymb}
\usepackage{subcaption}
\usepackage{accents}
\usepackage{booktabs}
\usepackage{siunitx}
\usepackage{todonotes}

\newenvironment{proofsketch}
  {\proof}
  {\endproof}

\newtheorem{theorem}{Theorem}
\newtheorem{lemma}{Lemma}
\newtheorem{definition}{Definition}

\DeclareMathOperator*{\argmin}{argmin}

\newcommand{\blue}[1]{{#1}}
\newcommand{\red}[1]{}

\title{Front-to-Attractors: Modifying the Front-to-Front Heuristic in Bidirectional Search}
\author{
    Alvin Zou\textsuperscript{\rm 1},
    Muhammad Suhail Saleem\textsuperscript{\rm 1},
    Maxim Likhachev\textsuperscript{\rm 1}
}
\affiliations{
    \textsuperscript{\rm 1}Carnegie Mellon University\\
    azou@andrew.cmu.edu, msaleem2@andrew.cmu.edu,
    maxim@cs.cmu.edu
}

\usepackage{bibentry}

\begin{document}

\maketitle

\begin{abstract}
Heuristics play a central role in the performance of bidirectional search algorithms, which commonly rely on two main classes. \textit{Front-to-end} (F2E) heuristics estimate the distance from a state $s$ to the target of the search (the $\mathit{goal}$ for forward search or the $\mathit{start}$ for backward search). In contrast, \textit{front-to-front} (F2F) heuristics estimate the distance from $s$ to the opposite search frontier using a pairwise function $h(s, s')$, where $s'$ ranges over frontier states. Although F2F heuristics are typically more informative and therefore reduce the number of node expansions, their reliance on extensive pairwise evaluations incurs substantial computational overhead. To address this limitation, we introduce a new heuristic class, \textit{front-to-attractors} (F2A), that preserves much of the informativeness of F2F while dramatically reducing its computational cost. Rather than evaluating distances to all states on the opposite frontier, F2A estimates the distance from $s$ to a small, dynamically maintained set of \textit{attractors} in the opposite search direction. These attractors serve as a surrogate for the full frontier, enabling rich heuristic guidance at a fraction of the computational expense while maintaining the optimality guarantees offered by F2F. We evaluate F2A across multiple domains and show that it reduces the number of pairwise evaluations by up to $11.2\times$ compared to F2F, while achieving $4.8\times$ fewer node expansions than F2E on average.
\end{abstract}


\section{Introduction}

Bidirectional heuristic search (Bi-HS) aims to find the least-cost path between two states by running a forward and backward search simultaneously, reducing effective search depth as the two frontiers meet. A central design choice in Bi-HS is the heuristic used to guide these searches. Most existing algorithms \red{rely on}\blue{use} \textit{front-to-end} (F2E) heuristics, which estimate the \red{distance (i.e., cost)}\blue{cost} from a state to the target of the current search direction (\red{the }\textit{goal} for forward search or \red{the }\textit{start} for backward search). Although inexpensive, F2E heuristics offer weak guidance because they treat the two searches independently and ignore the progress of the opposite frontier (i.e., Open list). In contrast, \textit{front-to-front} (F2F) heuristics estimate the distance from a state to the opposite frontier using a pairwise function $h(s, s')$, where $s'$ ranges over \blue{the }states in the \red{frontier of the opposite direction}\blue{opposite frontier}. F2F heuristics are more informative \cite{siag2023comparing} and typically require fewer state expansions to find a solution. Their drawback is computational cost: evaluating the heuristic of a state in one direction requires pairwise \red{heuristic }evaluations between the state and all states in the opposite frontier.

Due to this overhead leading to long planning times, most work on improving Bi-HS focuses on F2E methods, proposing enhanced expansion rules, stronger termination conditions, and effective pruning strategies \cite{BAE, holte2017mm, shaham2017minimal, eckerle2017sufficient, NBS, DIBBS, DVCBS, MEET}. Only a few approaches attempt to exploit or refine F2F heuristics \cite{felner2010single, lippi2012efficient, lippi2016optimally, mayer2019front, alcazar2021consistent}. Although F2F-based algorithms offer strong potential as their heuristics are more informative and often reduce the number of state expansions, they remain significantly slower in practice because heuristic evaluation scales quadratically with frontier size \cite{siag2023comparing}. Techniques that try to represent the frontier with a smaller set exist, but give suboptimal solutions \cite{anchor_search}.

To overcome the overhead associated with F2F heuristics, this manuscript introduces a new heuristic class, front-to-attractors (F2A), that retains much of the informativeness of F2F while greatly reducing its computational cost. The approach leverages the attractor representation introduced in \citet{zou2025attractor}. Instead of comparing $s$ against all states on the opposite frontier, F2A evaluates distances to a small, dynamically maintained set of \textit{attractors}. These attractors serve as a compact surrogate for the full frontier, enabling rich heuristic guidance at a fraction of the computational expense while preserving the optimality guarantees of F2F. We evaluate F2A across multiple domains, including 2D grid pathfinding, Sliding Tiles, and Pancake puzzle, and show that it reduces the number of pairwise heuristic evaluations by up to $11.2\times$ compared to F2F, while achieving $4.8\times$ fewer node expansions than F2E on average.


\section{Background}
A \red{bidirectional heuristic search (Bi-HS)}\blue{Bi-HS} problem instance $I=(G,start,goal,h_F,h_B)$ consists of a graph $G =(V,E)$, a $start$ state, a $goal$ state, and a forward (F) and backward (B) heuristic $h_F,h_B$. The cost of the shortest path between state $s$ and \red{state }$s'$ in $G$ is denoted by $c(s, s')$. The objective is to find a path from $start$ to $goal$ with optimal cost \red{$c(start,goal)$}\blue{$C^*$}. Bi-HS algorithms perform two searches simultaneously: a forward search from $start$ and a backward search from $goal$. Each search direction $D \in \{F,B\}$ maintains its own Open list, $Open_D$. For a state $s$, $g_D(s),h_D(s),f_D(s)$ denote \red{the}\blue{its} $g$-, $h$-, and $f$-values \red{of $s$ }in direction $D$. The opposite direction is denoted by $OD$.


\textit{front-to-end} (F2E) heuristics use a forward heuristic $h_F: \mathcal{S} \rightarrow \mathbb{R}_{\ge 0}$ that estimates the cost from any state to $goal$ and a backward heuristic $h_B: \mathcal{S}  \rightarrow \mathbb{R}_{\ge 0}$ that estimates the cost from any state to $start$. The $f$-value is computed as $f_D(s) = g_D(s) + h_D(s)$.\red{In contrast,} \textit{front-to-front} (F2F) heuristics \cite{BHFFA} use a pairwise heuristic $h: \mathcal{S} \times \mathcal{S} \rightarrow \mathbb{R}_{\ge 0}$ that estimates the cost between any pair of states. The F2F heuristic of a state $s$ in \red{search }direction $D$ is computed by estimating the cost between $s$ and the opposite frontier. The $f$-value is \red{computed as }$f_D(s) = g_D(s) + \min_{s' \in Open_{OD}}(h(s,s')+g_{OD}(s'))$. \blue{We use the same definitions for bi-admissibility and bi-consistency as \cite{eckerle2017sufficient}.}

\begin{figure}
    \centering

    \includegraphics[width=0.5\linewidth]{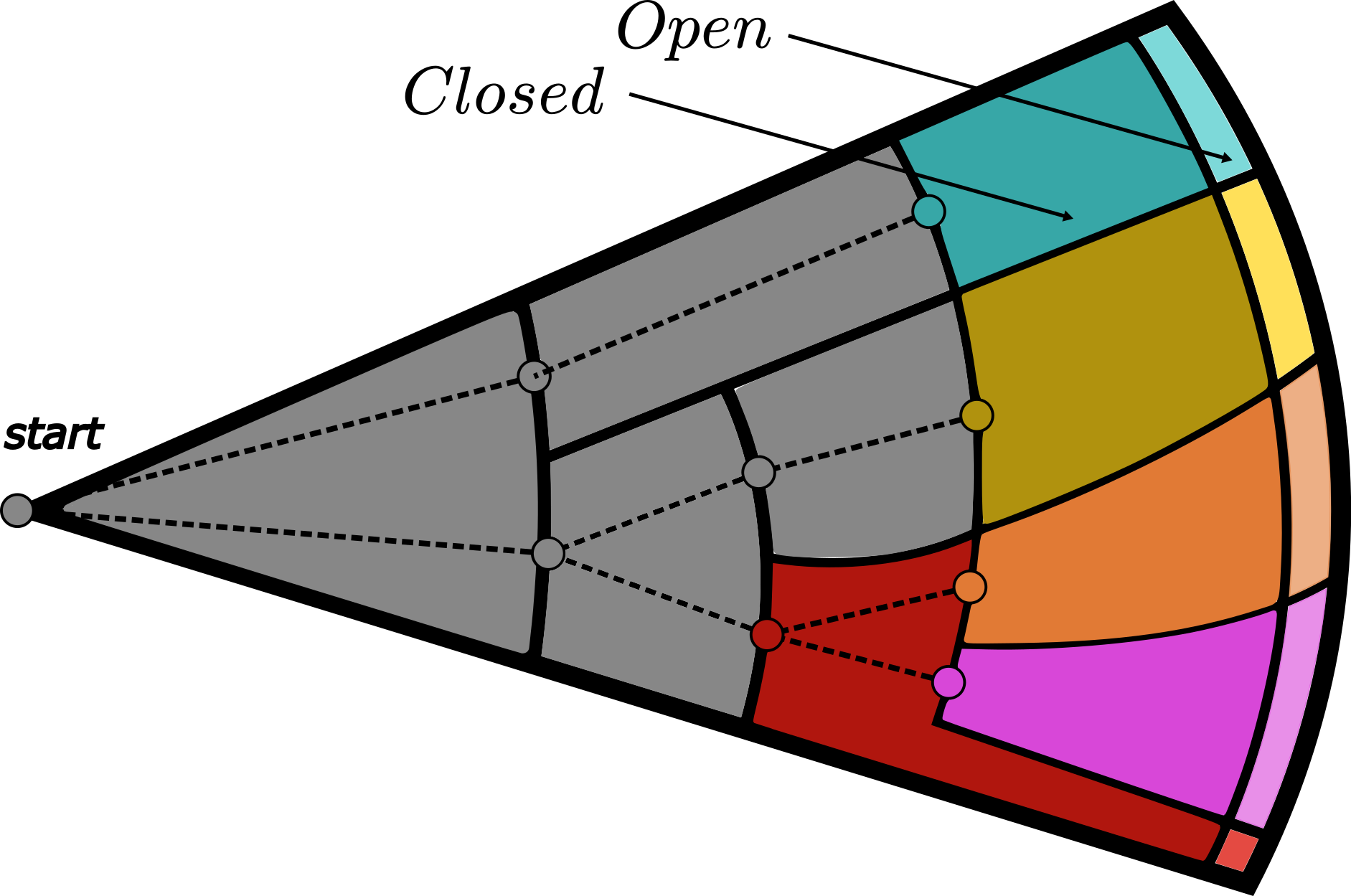}
    \caption{Attractors are shown as circles, with associated states in matching colors. The Open list is shown in lighter shades, and the remaining states are in the Closed list. Attractors with no associated frontier states are shown in gray.}
    \label{fig:attractor_3} \vspace{-0.3cm}
\end{figure}


\paragraph{Attractors}

Attractors \cite{zou2025attractor} were introduced as a memory-efficient alternative to storing the full Closed list in best-first search. They represent the Closed list sparsely by storing only an intelligently selected small set of states and using them as anchors for solution reconstruction. During search, every state added to the Open list is assigned to an attractor, and each attractor (except $start$) is linked to a parent attractor, forming a tree that serves as a compact surrogate for the full Closed list\red{ (Fig.~1)}. \blue{Fig.~\ref{fig:attractor_3} illustrates how attractors decompose the search space.}

Given a distance function $h_{\mathrm{dist}} : \mathcal{S} \times \mathcal{S} \rightarrow \mathbb{R}_{\ge 0}$, attractors are chosen so that the path to any state $s$ in the Open list can be reconstructed by \emph{greedy tracing}. Formally, let $a(s)$ denote the attractor assigned to $s$. Greedy tracing proceeds by repeatedly applying the update
$
s \leftarrow \arg\min_{s' \in Pred(s)} h_{\mathrm{dist}}(s',\, a(s)),
$
until reaching the attractor $a(s)$, after which tracing continues recursively through parent attractors until $start$ is reached. Each such greedy segment corresponds to a portion of the current best path to $s$, which allows all non-attractor states to be safely discarded while still enabling full solution reconstruction. \blue{More details on how attractors are maintained are in the appendix.}

\red{Fig.~\ref{fig:attractor_3} illustrates how attractors decompose the search space into regions. This representation was introduced with Attractor-based Closed List Search, which automatically identifies attractors. More details are in the appendix.}


\section{Method}

In F2F heuristics, computing the heuristic value of a state in one search direction requires comparing it against all states in the opposite frontier to obtain a valid underestimate of the optimal path cost. While this provides highly informative guidance, it also incurs substantial computational overhead. To reduce this cost, we introduce \textit{front-to-attractors} (F2A) heuristics. Although attractors were originally proposed for \red{unidirectional search as a sparse representation of the Closed list}\blue{sparsifying the Closed list}, they also have an important structural property: they form a sparse set of ancestors for the states on the frontier. F2A leverages this structure \red{in the
bidirectional setting }by replacing comparisons against the entire opposite frontier with comparisons against a much smaller, dynamically maintained set of active attractors, denoted $Attrs_D$.

As illustrated in Fig.~2, computing the F2F heuristic for a state $s$ requires pairwise evaluations between $s$ and all states in the opposite frontier (highlighted in yellow). \red{In contrast, the F2A heuristic evaluates $s$ only against the attractors of the opposite direction (highlighted in red), each of which has at least one associated frontier state.} \blue{In contrast, the F2A heuristic evaluates $s$ only against the active attractors in the opposite direction that have at least one associated frontier state (highlighted in red).} \red{This attractor set therefore acts as a compact surrogate for the frontier.} Formally, the F2A heuristic for a state $s$ in direction $D$ is computed as: $h_D(s) = \min_{s' \in Attrs_{OD}} \bigl( h(s, s') + g_{OD}(s') \bigr)$.
Since all least-cost paths discovered so far must pass through their corresponding attractors, F2A preserves optimality guarantees while reducing the number of pairwise evaluations. \footnote{\red{Unlike ACLS, which maintains the full attractor tree back to $start$, F2A keeps only the active attractors—those with at least one associated state in the Open list (colored in Fig. \ref{fig:attractor_3}).}\blue{F2A keeps only the active attractors with associated states in the Open list (colored in Fig~\ref{fig:attractor_3}), rather than the full attractor tree.}}


\begin{figure}
    \centering
    \includegraphics[width=0.8\linewidth]{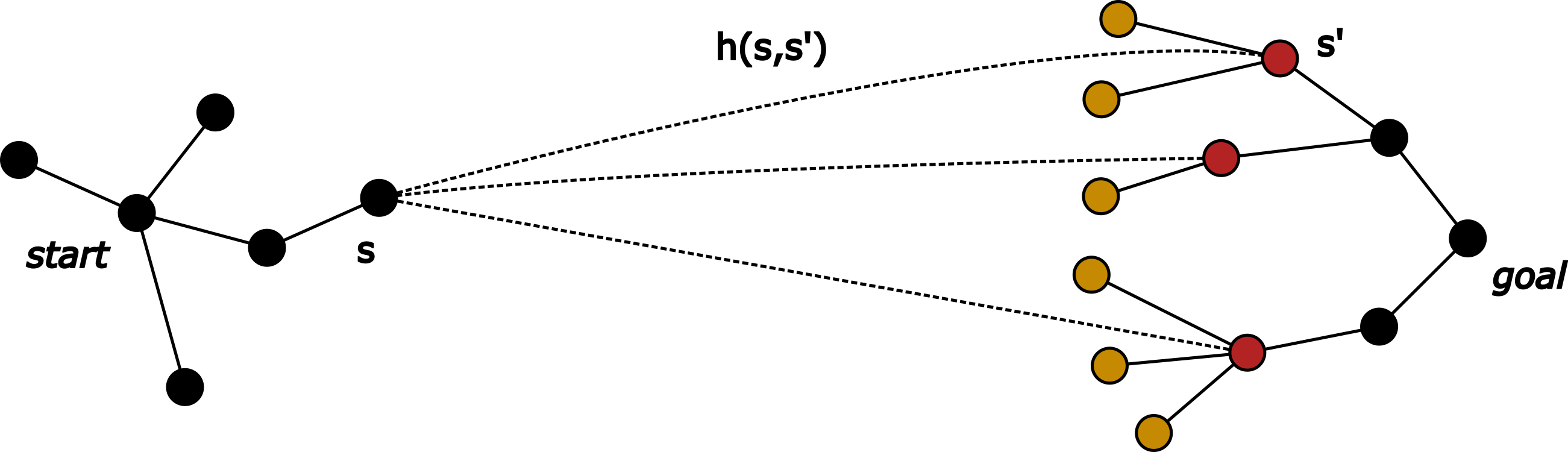}
    \caption{F2F heuristics estimate the distance to the opposite frontier (shown in yellow). F2A heuristics estimate the distance to the opposite attractors (shown in red).}
    \vspace{-0.3cm}
    \label{fig:compare_f2f_f2a}
\end{figure}

\subsection{Algorithm Overview}


Bi-HS with F2A heuristics follows the standard Bi-HS loop used for F2F heuristics, requiring only minor additional bookkeeping to maintain the set of active attractors. The pairwise heuristic $h(s,s')$ must be \blue{bi-}consistent. The overall procedure consists of the following \red{major }steps, as outlined in Alg.~\ref{alg:f2a}. At each iteration, the search:


\begin{enumerate}
    \item \textbf{Selects a search direction:} 
    A common strategy \blue{,which we use,} is to select the direction $D$ with the smaller Open list $Open_D$ (line~\ref{alg:f2a_choose_dir}), as noted by \citet{holte2017mm}. 
    
    \item \textbf{Expands a state:} 
    The state $s$ with the smallest $f_D$-value (higher $g_D$-value for tiebreaking) in $Open_D$ is selected for expansion (lines~\ref{alg:pop_state}--\ref{alg:expand_state}). Its successors are generated, assigned $f$-values, inserted into $Open_D$, and used to update $U$ (lines~\ref{alg:f2a_update_succ_init}--\ref{alg:f2a_update_succ_end}). Each successor’s heuristic is computed once upon generation by comparing the state to attractors in the opposite direction $Attrs_{OD}$ (line~\ref{alg:f2a_update_fvalue}).
    
    \item \textbf{Updates attractors:}  
    When expanding $s$, if a better path to a successor $s'$ is found, the attractor of $s'$ is updated (line~\ref{alg:f2a_update_attractor}). The algorithm attempts to assign $s'$ the same attractor as $s$ to minimize the number of attractors maintained. This is allowed only if greedy tracing from $s'$ toward this attractor leads through $s$ (line~\ref{alg:F2A_greedy_pred}). If this condition holds, $s'$ inherits $s$'s attractor (lines~\ref{alg:F2A_UpdateAttractor_greedypredMatch}--\ref{alg:F2A_assgin_parent_attr}). Otherwise, $s$ becomes the attractor for $s'$ and is inserted into $Attrs_D$ (line~\ref{alg:F2A_UpdateAttractor_storeAttractor}). This update logic follows the same principles \citet{zou2025attractor}\footnote{We also use the same tiebreaking strategy when paths of the same cost are found.}; \blue{details are} omitted from the pseudocode for clarity.

    By assigning attractors in this manner, $Attrs_D$ forms a sparse set of ancestor states through which all current best paths to the frontier states pass. This sparsity makes F2A heuristics much cheaper to compute while still providing strong guidance.

    \item \textbf{Checks for termination:}  
    \red{As in standard Bi-HS, the}\blue{The} search terminates when the current best solution cost $U$ is less than or equal to the lower bound $L$ (optimal path has been found) or if the Open list becomes empty (no solution exists) (line~\ref{alg:f2a_check_termination}).
\end{enumerate}

\begingroup
\begin{algorithm}[!ht]
    \caption{\textsc{Bi-HS with F2A Heuristic} \label{alg:f2a}}
    
    \begin{algorithmic}[1] 
    \Procedure{F2A Bi-HS}{$start, goal$}
        \State $Attrs_F \gets \{start\}, Attrs_B \gets \{goal\}$ \label{alg:f2a_init_begin}
        \State $Open_F \gets \{start\}, Open_B \gets \{goal\}$ \label{alg:f2a_set_open}
        \State $U, L \gets \infty$\label{alg:f2a_init_end} 
        
        \While {\textsc{NotTerminated}($U,L$)} \label{alg:f2a_check_termination}
            \State $D \gets$ \textsc{ChooseDirection()} \label{alg:f2a_choose_dir}
            \State \textsc{Pop} $s$ with smallest $f_D$-value from $Open_D$ \label{alg:pop_state}
            \State \textsc{Expand}($s$, $D$) \label{alg:expand_state}
            \State $L \gets$ \textsc{UpdateLB()} \label{alg:f2a_update_lb}
            \State \textsc{Remove} all $a \in Attrs_D$ that are unassigned \label{alg:remove_attractors}

        \EndWhile
    \EndProcedure
    \Statex       
    \vspace{-0.75em}
    \Procedure{Expand}{$s$, $D$}
        \State Move $s$ from $Open_D$ to $Closed_D$
        \For {$s' \in Succ_D(s)$} \label{alg:f2a_update_succ_init}
            \If{$g_D(s') \geq g_D(s) + c(s, s')$} \label{alg:f2a_check_succ_cost}
                
                \State $g_D(s') = g_D(s) + c(s, s')$ \label{alg:f2a_update_succ_cost}
                \State \textsc{UpdateAttractor}$(s, s',D)$ \label{alg:f2a_update_attractor}
                \If{$s' \in Open_D \cup Closed_D$}
                \State remove $s'$ from $Open_D \cup Closed_D$
                \EndIf
            \EndIf
            
            \If {$s' \notin Open_D$}

                \State $h_D(s') = \min_{s'' \in Attrs_{OD}}(h(s',s'')+g_{OD}(s''))$ \label{alg:f2a_update_fvalue} \blue{\Comment{If not yet computed}}

                \State \textsc{Insert} $s'$ in  $Open_D$ with $g_D(s')+h_D(s')$
            \EndIf
            \If{$s' \in Open_{OD}$} \label{alg:U_update_condition}
                \State $U \gets \min(U,g_F(s')+g_B(s'))$ \label{alg:update_mincost}
            \EndIf
        \EndFor \label{alg:f2a_update_succ_end}
    \EndProcedure
    \Statex
    \vspace{-0.75em}
    \Procedure{UpdateAttractor}{$s,s',D$} \label{alg:F2A_UpdateAttractor}
    
    \State $s^{''}_{min} \gets \argmin_{s^{''} \in Pred(s')}$ $\mathbf{h_{dist}}(s^{''},s.attractor)$ \label{alg:F2A_greedy_pred}
    \State $s'.attractor \gets s$ 
    \If {$s^{''}_{min} = s$} \label{alg:F2A_UpdateAttractor_greedypredMatch}
        \State $s'.attractor \gets s.attractor$ \label{alg:F2A_assgin_parent_attr}
    \ElsIf{$s$ not in $Attrs_D$}
    \State \textsc{Insert} $s$ in $Attrs_D$ \label{alg:F2A_UpdateAttractor_storeAttractor}
    \EndIf
    \EndProcedure

    \end{algorithmic}
\end{algorithm}
\endgroup
\vspace{-0.3cm}

\subsection{Theoretical Properties}



Our proof follows the same structure as \citet{holte2017mm}. Here we provide a high level \red{discussion}\blue{sketch}. A more detailed proof can be found in the appendix.

\begin{definition} \label{def:sij}
For any optimal path $P=s_0,s_1,...s_n$ from $start$ $(s_0)$ to $goal$ $(s_n)$, let $i$ be the largest index such that $s_k \in Closed_F$ $\forall k\in[0,i-1]$, and let $j$ be the smallest index such that $s_k \in Closed_B$ $\forall k\in[j+1,n]$. We say $P$ ``has not been found" if $i < j$ and $P$ ``has been found" otherwise.
\end{definition}

\begin{theorem} \label{theorem:sketch}
If, at the beginning of an iteration of Alg.~\ref{alg:f2a}, there exists an optimal path $P$ from $start$ to $goal$ that has not been found, then $L \le C^*$.
\end{theorem}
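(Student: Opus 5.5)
The plan follows the structure of the corresponding lemma in \citet{holte2017mm}. Fix an optimal path $P=s_0,\dots,s_n$ that has not been found, with indices $i<j$ as in Definition~\ref{def:sij}. I will show that the frontier state $s_i$ has $f_F(s_i)\le C^*$. The bound $L$ returned by \textsc{UpdateLB} is at most $\min_{s\in Open_F} f_F(s)$ (and symmetrically for $B$), so $L\le C^*$ then follows.

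\emph{Step 1 (frontier states with optimal $g$).} Since $s_0,\dots,s_{i-1}\in Closed_F$ and $s_i\notin Closed_F$ by maximality of $i$, the state $s_i$ has been generated from $s_{i-1}$ and lies in $Open_F$. By the standard argument with a bi-consistent heuristic, every closed state on $P$ has its optimal $g$-value, so $g_F(s_i)=c(start,s_i)$. The same argument gives $s_j\in Open_B$ with $g_B(s_j)=c(s_j,goal)$. The path consisting of $P$ up to $s_j$ followed by the current best backward path from $s_j$ to $goal$ therefore costs $C^*$, so it is itself optimal.

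\emph{Step 2 (an active attractor on an optimal path).} Let $a=a(s_j)$ be the attractor of $s_j$. Because $s_j\in Open_B$, $a$ is assigned, so it is never removed by line~\ref{alg:remove_attractors} and belongs to $Attrs_B$. Greedy tracing from $s_j$ to $a$ reproduces a portion of the current best backward path to $s_j$. Hence $g_B(s_j)=g_B(a)+c(s_j,a)$, and $a$ lies on the optimal path constructed in Step 1. Using admissibility of $h$ (implied by bi-consistency) and the triangle inequality,
\[
g_F(s_i)+h(s_i,a)+g_B(a)\le c(start,s_i)+c(s_i,s_j)+c(s_j,a)+g_B(a)=c(start,s_j)+g_B(s_j)=C^*.
\]

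\emph{Step 3 (staleness of $h$; the main obstacle).} The heuristic $h_F(s_i)$ was computed once, when $s_i$ was last inserted into $Open_F$. At that moment $Attrs_B$ may have differed from its current value, so the attractor $a$ found in Step 2 may not have existed yet. I would therefore apply Steps 1--2 at that earlier time $t$ rather than now.

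At time $t$, the forward closed prefix of $P$ was exactly $s_0,\dots,s_{i-1}$, because $s_i$ was open then and has stayed open. On the backward side, closed suffixes only grow, so the index $j_t$ defined at time $t$ satisfies $j_t\ge j>i$. So $P$ was also unfound at time $t$, and $s_{j_t}$ was in $Open_B$ with optimal $g_B$. It had an active attractor $a_t\in Attrs_B$ at time $t$ that lies on an optimal path through $s_{j_t}$.

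The minimum defining $h_F(s_i)$ includes the term for $a_t$, and this term uses $g_B(a_t)$ as it was at time $t$. Applying the inequality of Step 2 to $s_{j_t}$ and $a_t$ gives $f_F(s_i)\le C^*$. Since neither $g_F(s_i)$ (already optimal) nor $h_F(s_i)$ has changed since time $t$, the bound still holds now.

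One technical point needs care in the appendix: that $g_B(a_t)$ was already optimal at time $t$. This holds because $a_t$ is an ancestor on the best path of $s_{j_t}$, whose $g$-value is optimal.

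Combining the three steps gives $L\le\min_{s\in Open_F}f_F(s)\le f_F(s_i)\le C^*$.
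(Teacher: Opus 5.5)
Your proof is correct, but it takes a different route from the paper's in two respects.

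\textbf{Key inequality.} The paper proves a domination statement. For each $s'\in Open_B$ with attractor $a$, consistency gives $h(s,a)+g_B(a)\le h(s,s')+c(s',a)+g_B(a)=h(s,s')+g_B(s')$. So the F2A value of every state is at most its F2F value, and the F2F/MM bound $g_F(s_i)+h(s_i,s_j)+g_B(s_j)\le C^*$ then finishes. You instead bound only the one term belonging to $a=a(s_j)$, via $h(s_i,a)\le c(s_i,a)\le c(s_i,s_j)+c(s_j,a)$. This needs only admissibility of $h$ plus the attractor property $g_B(a)+c(s_j,a)\le g_B(s_j)$. The paper spends consistency to get the more general comparison F2A $\le$ F2F, which this theorem does not require.

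\textbf{Staleness.} The paper's proof evaluates $f_F$ against the \emph{current} $Attrs_B$, although Alg.~1 computes $h_F$ only once, at generation; the paper never addresses this. Your Step~3 closes that gap by anchoring the argument at the insertion time $t$ and using $j_t\ge j>i$. The paper's route could also be repaired under consistency, since the F2F value of a fixed state never decreases over time, but the paper does not say so. Your Step~3 also needs neither $g_F(s_i)$ nor $g_B(a_t)$ to be optimal at time $t$. The $a_t$ term alone gives $h_F(s_i)\le c(s_i,s_{j_t})+g_B(s_{j_t})=c(s_i,goal)$. So your argument also covers the reading in which $h$ is cached from the first generation of $s_i$.

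\textbf{Justifications to tighten.}
\begin{itemize}
\item In Step~1, optimality of $g_F$ on the closed prefix does not come from bi-consistency. Because $h_F$ depends on when it was computed, the classical consistency argument does not apply; this is why Alg.~1 reopens states. Your claim that ``every closed state on $P$'' has optimal $g$ need not hold. What does hold is that $s_0,\dots,s_{i-1}$ are all permanently closed. This follows from the reopening argument behind the paper's lemmas on permanently closed nodes and on $n_F$, $n_B$.
\item The same lemmas are what justify ``closed suffixes only grow.'' That statement is true for the closed suffix of an optimal path, whose states are permanently closed and so stay closed. It is not true for $Closed_B$ in general.
\item The claim that the forward closed prefix at time $t$ was \emph{exactly} $s_0,\dots,s_{i-1}$ is unjustified. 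The state $s_{i-1}$ may have been closed via a different optimal path while some earlier $s_m$ was still open. You only need $i_t\le i$, which holds because $s_i$ was open at $t$.
\end{itemize}
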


\begin{proofsketch}
If there exists an optimal path that has not been found, there will be a node in $P$, $s_i$, in $Open_F$ with $g_F(s_i)=c(start,s_i)$ and a node in $P$, $s_j$, in $Open_B$ with $g_B(s_j)=c(s_j,goal)$. Then $C^* = g_F(s_i)+c(s_i,s_j)+g_B(s_j) \ge g_F(s_i)+h(s_i,s_j)+g_B(s_j)$ (admissibility). With the F2A heuristic, $f_F(s) = g_F(s) + \min_{a \in Attr_B}\bigl(h(s,a)+g_B(a)\bigr)$. By construction of attractors, for every $s' \in OPEN_B$ there exists $a \in Attr_B$ with $g_B(s')=c(s',a)+g_B(a)$. Since $h(s,a) \le h(s,s')+c(s',a)$ (consistency), $h(s,a) + g_B(a) \le h(s,s') + g_B(s')$. Taking minima over attractors yields $f_F(s) \le g_F(s) + \min_{s' \in OPEN_B} \bigl(h(s,s') + g_B(s')\bigr)$. Therefore, $L=\min_{s \in OPEN_F}f_F(s) \le \min_{s \in OPEN_F,\, s' \in OPEN_B} \bigl(g_F(s) + h(s,s') + g_B(s')\bigr) \le g_F(s_i) + h(s_i,s_j) + g_B(s_j) \le C^*.$   
\end{proofsketch}

\begin{theorem}
If there exists a path from $start$ to $goal$, Alg.~\ref{alg:f2a} will return $U=C^*$ (an optimal path) upon termination.
\end{theorem}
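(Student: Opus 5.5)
We follow the structure of the MM correctness argument of \citet{holte2017mm}, using Theorem~\ref{theorem:sketch} as the key lemma. There are two things to show. First, $U$ is always the cost of an actual start--goal path, so $U \ge C^*$ at all times. Second, termination with $U > C^*$ is impossible. For the first claim, note that $U$ is only updated at line~\ref{alg:update_mincost} to $g_F(s')+g_B(s')$ for a state $s'$ present in both directions. Each $g_D$-value is the cost of a concrete path discovered by that search, and this path can be reconstructed by greedy tracing through the attractor tree. Concatenating the two paths gives a start--goal path of cost $g_F(s')+g_B(s')$, so $U \ge C^*$ by optimality of $C^*$.

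For the second claim we argue by contradiction. Suppose the algorithm terminates with $U > C^*$. We split into two cases according to the termination condition.

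\emph{Case 1: an Open list became empty.} Since a path exists, some optimal path $P$ exists. We show $P$ cannot have been fully swallowed by the Closed lists without $U$ being set to $C^*$. Take $i,j$ as in Definition~\ref{def:sij}. If $P$ has not been found ($i<j$), the standard A*-style induction applies. Every prefix state $s_0,\dots,s_{i-1}$ is closed with optimal $g_F$, because a state is reopened whenever a cheaper path is found (line~\ref{alg:f2a_check_succ_cost}). Hence $s_i$ was generated with optimal $g_F$ and lies in $Open_F$, so $Open_F$ is nonempty. The same reasoning gives $s_j\in Open_B$. So emptiness forces $P$ to have been found.

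\emph{Case 2: termination by $U \le L$.} If some optimal path has not been found at the start of that iteration, Theorem~\ref{theorem:sketch} gives $L\le C^* < U$. This contradicts $U\le L$.

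It therefore remains to handle the situation where every optimal path has been found, i.e.\ $i \ge j$. Take such a $P$ and consider the moment it became found. This happened when some state on $P$ was expanded in one direction, making the forward-closed prefix and the backward-closed suffix meet or overlap. At that expansion a state $s_k$ on $P$ was generated whose $g_F(s_k)$ and $g_B(s_k)$ are both optimal, with $s_k$ present in the opposite Open list or already closed there. Line~\ref{alg:update_mincost} then sets $U \le g_F(s_k)+g_B(s_k)=C^*$. Combined with $U\ge C^*$, this gives $U=C^*$.

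The main obstacle is exactly this last step. The $U$ update at line~\ref{alg:U_update_condition} only fires when the successor lies in $Open_{OD}$, and states are moved between Open and Closed lists (and possibly reopened). So we must verify carefully that at the iteration when $i$ first reaches $j$, the meeting state is generated while it is still in the opposite Open list with optimal $g$-values in both directions. We would first try an invariant proved by induction over iterations: while $P$ is not found, $s_i\in Open_F$ with $g_F(s_i)$ optimal and $s_j\in Open_B$ with $g_B(s_j)$ optimal. The transition $i<j \to i\ge j$ can only occur by expanding $s_i$ (or $s_j$) when $j=i+1$ or $i=j$, and in each subcase we would check that the update rule records $C^*$. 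Since bi-consistency of $h$ ensures $f$-values along $P$ are nondecreasing, reopenings cannot break this invariant; this is the same role consistency plays in Theorem~\ref{theorem:sketch}. Finally, termination in finite graphs follows because each expansion strictly improves some $g$-value or closes a new state.
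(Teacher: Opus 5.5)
Your overall structure matches the paper's. You get $U\ge C^*$ by construction. Nonempty Open lists together with Theorem~\ref{theorem:sketch} rule out termination while an optimal path is unfound. It then remains to show that $U$ is set to $C^*$ when an optimal path becomes found.

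The gap is in that last step, which you flag as the main obstacle but do not close. Your plan rests on the claim that the transition from $i<j$ to $i\ge j$ can only occur with $j=i+1$, so that expanding $n_F=s_i$ generates $n_B=s_j$ while it is still in $Open_B$. You give no argument for this. For an arbitrary optimal path it is false under the strict-improvement reopening that the paper's proof assumes (Lemma~\ref{lemma:permanently_closed}). The states $s_{i+1},\dots,s_{j-1}$ may already lie in $Closed_F$ with optimal $g_F$, reached through a different optimal prefix. Expanding $s_i$ then makes $P$ found although $s_j$ is not a successor of $s_i$, and nothing on $P$ triggers line~\ref{alg:U_update_condition}. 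This is exactly your ``or already closed there'' case, in which the update of $U$ does not fire. (Under a literal reading of the $\geq$ test in line~\ref{alg:f2a_check_succ_cost}, an equal-cost regeneration would reopen $s_{i+1}$ and adjacency would follow. But that argument has to be made explicitly, and it is not the model the paper's lemmas use.)

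The missing idea, which the paper's lemma on setting $U=C^*$ supplies, is to take $P$ to be the \emph{first} optimal path that becomes found. In the bad scenario above, consider the permanently closed generating path of $s_{i+1}$, followed by $s_{i+1},\dots,s_{j-1}$, $n_B$, and the backward path from $n_B$ to $goal$. This is an optimal path that was already found at the start of the iteration, a contradiction. Hence $n_B$ is a child of $n_F$: it lies in $Open_B$ with $g_B(n_B)=c(n_B,goal)$, receives $g_F(n_B)=c(start,n_B)$, and line~\ref{alg:update_mincost} sets $U=C^*$ (symmetrically if $n_B$ is the state expanded). Your case analysis shows some optimal path has been found by termination, so a first one exists and the argument closes.

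Separately, your appeal to bi-consistency to protect the frontier invariant is misplaced. The invariant that $s_i\in Open_F$ and $s_j\in Open_B$ carry optimal $g$-values while $P$ is unfound is purely combinatorial. It follows from the reopening rule as in Lemmas~\ref{lemma:permanently_closed_optimal_succ}--\ref{lemma:nFB_exist}, and $f$-values play no role in it. Consistency is needed only inside Theorem~\ref{theorem:sketch}, to make the F2A value a lower bound on the F2F value.
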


\begin{proofsketch}
Theorem \ref{theorem:sketch} implies that $L \le C^*$ until all optimal paths have been found. And since $U > C^*$ until the first optimal path from $start$ to $goal$ is found (due to the process by which $U$ is updated), at which point $U=C^*$, this means that if a path does exist, when Alg.~\ref{alg:f2a} terminates $U=C^*$. 
\end{proofsketch}

This implies that when Alg.~\ref{alg:f2a} terminates at least one optimal path has been found, indicating that Bi-HS with F2A heuristics is both complete and optimal.

\section{Practical Considerations}
\paragraph{Degenerating to F2E}
If $goal$ is included in the backward attractor set $Attrs_B$, then for any state $s$ in the forward direction we have $h(s, goal) \leq h(s, s') + g_B(s') \,\, \forall\, s' \in Attrs_B$,
provided that $h$ is consistent. This implies that the F2A heuristic degenerates to F2E when the goal (or start) is an attractor, reducing the informativeness of the heuristic.


\paragraph{Optimizations to preserve heuristic informativeness.}

In general, attractors become less informative as they drift farther from the frontier. To mitigate this, we introduce two optimizations. Both rely on a threshold $\delta$ applied to the difference in $g$-values between an attractor and its assigned states.

\begin{enumerate}
    \item \textbf{New Attractor (NA).}
    When a successor $s'$ inherits the attractor of its parent $s$, we check whether the difference between $g(s')$ and $g(s.\mathrm{attractor})$ exceeds $\delta$. If so, $s$ is assigned as the new attractor for $s'$. This ensures the active attractor set is a good approximation of the search frontier by keeping attractors close.

    \item \textbf{Associated States (AS).}
    For each attractor, we maintain the subset of its assigned frontier states whose $g$-values differ from the attractor's by more than $\delta$. If this set is nonempty, we use the associated states in place of the attractor for heuristic computation. In effect, when an attractor becomes too far from the frontier to serve as an informative surrogate, we fall back to using its frontier states for heuristic evaluation.
\end{enumerate}

\sisetup{
  detect-all,
  table-number-alignment = center, 
  table-text-alignment = center,
  group-separator = {,}           
}
\setlength{\tabcolsep}{4.5pt}

\begin{table*}[ht]
\centering
\scalebox{0.75}{
\begin{tabular}{
  @{} l
  @{\hskip 4pt} S[table-format=5.0] S[table-format=1.3] S[table-format=5.0] 
  @{\hskip 10pt} S[table-format=6.0] S[table-format=3.0] S[table-format=6.0] 
  @{\hskip 10pt} S[table-format=2.0] S[table-format=2.0] S[table-format=4.0] 
  @{\hskip 10pt} S[table-format=4.0] S[table-format=3.0] S[table-format=5.0] @{} 
}
\toprule
\textbf{Planner}
    & \multicolumn{3}{c}{\textbf{DAO}}
    & \multicolumn{3}{c}{\textbf{Maze}} 
    & \multicolumn{3}{c}{\textbf{Pancake Gap-1}}
    & \multicolumn{3}{c}{\textbf{Sliding Tiles}}
    \vspace{-2pt}
                                      \\
\cmidrule(lr){2-4}
\cmidrule(lr){5-7}
\cmidrule(lr){8-10}
\cmidrule(lr){11-13}
\vspace{-2pt}
    & \multicolumn{1}{c}{\textbf{RT} (ms)}
    & \multicolumn{1}{c}{\textbf{Exp ($10^3$)}}
    & \multicolumn{1}{c}{\textbf{HE ($10^3$)}}
    & \multicolumn{1}{c}{\textbf{RT} (ms)}
    & \multicolumn{1}{c}{\textbf{Exp ($10^3$)}}
    & \multicolumn{1}{c}{\textbf{HE ($10^3$)}}
    & \multicolumn{1}{c}{\textbf{RT} (ms)}
    & \multicolumn{1}{c}{\textbf{Exp ($10^3$)}}
    & \multicolumn{1}{c}{\textbf{HE ($10^3$)}}
    & \multicolumn{1}{c}{\textbf{RT} (ms)}
    & \multicolumn{1}{c}{\textbf{Exp ($10^3$)}}
    & \multicolumn{1}{c}{\textbf{HE ($10^3$)}}\\
\midrule

A*
    & 47 & 17 & 17
    & 534 & 185 & 185 
    & 220 & 6.5 & 67    
    & 2745 & 340 & 605
                                          \\
BAE* F2E & 35 & 20 & 21 & 258 & 143 & 143 & 32 & 0.695 & 8 & 1205 & 151 & 296\\

\midrule

VBi-HS F2E
    & 85 & 25 & 25
    & 1100 & 280 & 281
    & 175 & 4.8 & 53
    & 2746 & 307 & 577
                                      \\
VBi-HS F2F
    & 91 & 14 & 2865
    & 1043 & 134 & 34907 
    & 26545 & 0.541 & 26189
    & 118263 & 20 & 92775
                                   \\
VBi-HS F2A (NA)
    & 76 & 16 & 525
    & 778 & 136 & 4897 
    & 778 & 4.8 & 53
    & 19161 & 108 & 75500
                                     \\

VBi-HS F2A (AS) 
    & 119 & 17 & 1273
    & 1148 & 139 & 15275
    & 14846 & 0.583 & 14782
    & 76174 & 20 & 232716
                                       \\

VBi-HS F2A (no opt)
    
    & 76 & 18 & 278
    & 660 & 138 & 3120 
    & 756 & 4.8 & 53
    & 4374 & 307 & 593\\
\midrule

NBS F2E
    
    & 73 & 21 & 21
    & 558 & 147 & 147     
    & 168 & 4.7 & 54
    & 3111 & 290 & 548\\
NBS F2F
    
    & 99 & 16 & 1794
    & 934 & 135 & 15709 
    & 35068 & 0.828 & 35759
    & 344803 & 37 & 332158\\
NBS F2A (NA)
    
    & 85 & 18 & 319
    & 819 & 138 & 3545        
    & 517 & 4.7 & 183
    & 31832 & 94 & 143202\\

NBS F2A (AS)
    & 553 & 21 & 12764
    & 16553 & 146 & 277741     
    & 50756 & 1.1 & 51659
    & 1301147 & 60 & 780339
                                  \\

NBS F2A (no opt)
    
    & 84 & 18 & 319
    & 745 & 138 & 3545     
    & 449 & 4.7 & 104
    & 5964 & 290 & 4306\\

\bottomrule
\end{tabular}
}
\caption{Results for planners across domains. \red{RT is measured in milliseconds; Exp and HE are scaled by $10^3$. }F2A \red{planners }employ optimizations (“no opt” means no optimization) that vary by domain.}
\label{tab:results}
\end{table*}

\begin{figure}[!t]
    \centering
    \includegraphics[width=0.8\linewidth]{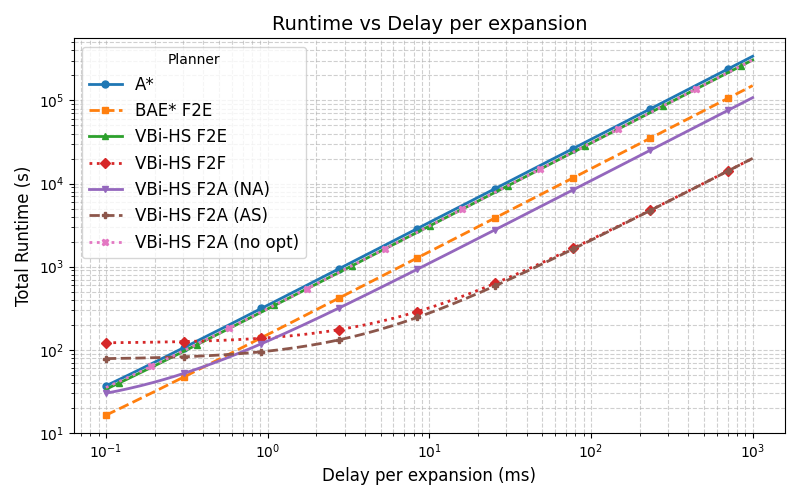}
    \caption{Sliding Tiles with an added delay per expansion.}
    \vspace{-0.3cm}
    \label{fig:delay_plot}
\end{figure}

\section{Experiments \label{sec:f2a_experiments}}

We evaluated F2A on three standard benchmark domains commonly used in Bi-HS studies. \textbf{(1) 2D Grid Pathfinding:} Evaluated on the DAO “brc” and maze maps with five corridor widths \cite{sturtevant2012benchmarks}, using 4-connected unit-cost actions and the Manhattan Distance heuristic. \textbf{(2) 15 Sliding Tiles:} 20 instances (fewest nodes) from \red{Korf’s test set }\cite{korf1985depth}, using the Manhattan Distance heuristic. \textbf{(3) 14-Pancake Puzzle:} 50 random instances \red{evaluated }using the GAP-$1$ heuristic \blue{(GAP heuristic \cite{GAP} but ignoring the smallest pancake.)}

We compared F2A (with and without the NA and AS optimizations) to F2E and F2F using both vanilla Bidirectional Heuristic Search (VBi-HS) and Near-optimal Bidirectional Search (NBS) \cite{NBS}. \blue{VBi-HS denotes the framework used to instantiate F2E, F2F, and F2A (i.e., Alg. 1 with different heuristic formulations), following the ideas of \cite{pohl1969bi} and \cite{BHFFA}.} Since NBS was originally designed for F2E heuristics, we extended it to support F2F and F2A by \red{replacing its heuristic evaluation subroutine with the corresponding formulation.}\blue{changing the heuristic used to compute $f$-values while keeping the same pairwise lower bound.} \red{All other aspects remain the same.} BAE* \cite{BAE} is a state-of-the-art Bi-HS algorithm but assumes \red{a fixed heuristic}\blue{fixed heuristic values assigned at node generation}, making it unclear how to extend it to dynamic heuristics like F2F and F2A \blue{that depend on the evolving opposite frontier}\red{ while preserving informativeness}. However, we include BAE* with F2E heuristics and A* \blue{\cite{A*}} as references. \blue{The same search direction policy is used for the Bi-HS algorithms.} We report results using $\delta = 20$ for both NA and AS on the grid maps, and $\delta = 4$ for Sliding Tiles and Pancake. \red{A discussion on selecting different $\delta$ values \red{can be found}\blue{is} in the appendix.} For each planner we report mean runtime (RT), number of node expansions (Exp), and number of heuristic evaluations (HE), presented in Table~\ref{tab:results}. Our evaluation focuses on how different heuristic classes trade off computation cost against search guidance, rather than on identifying a dominant heuristic. \blue{Additional discussion on the experimental results is included in the appendix.}

\paragraph{Grid Maps}

With VBi-HS, both F2A (no opt) and F2A (NA) achieve the fastest runtimes among all variants. The improvement is primarily driven by a substantial reduction in heuristic evaluations—10.3× fewer on DAO and 11.2× fewer on Maze maps compared to F2F—while still outperforming F2E in terms of expansions (1.4× and 2.0× fewer). Under NBS, the F2A variants also require fewer heuristic evaluations than F2F and produce fewer expansions than F2E. As expected, F2A reduces heuristic evaluations versus F2F while preserving informativeness, yielding fewer expansions than F2E; this trend holds across domains.

\paragraph{Sliding Tiles}

Naive F2A degenerates to F2E in this domain. However, incorporating the NA optimization makes F2A substantially more informative. With VBi-HS, expansions decrease by $2.84\times$ relative to F2E, and by $3.08\times$ under NBS. Heuristic evaluations decrease by $1.23\times$ and $2.32\times$ compared to F2F, for VBi-HS and NBS respectively. The AS optimization further reduces expansions but incurs a much higher number of heuristic evaluations.

To better understand when F2A heuristics provide benefits, we conducted an additional experiment in which an artificial delay was added to every state expansion. The resulting runtime curves, shown in Fig.~\ref{fig:delay_plot}, reveal a performance window \red{between approximately $10^5$ and $10^6$ ns delay per expansion where F2A (NA) achieves the best overall runtime across all planners.}\blue{after $0.6$ ms where F2A (AS) achieves the best runtime across all planners.} This study suggests the existence of domains in which the trade-off between heuristic computation cost and expansion cost aligns favorably for F2A, enabling it to outperform F2E, F2F, and A* in total runtime.

\paragraph{Pancake}
F2A also degenerates to F2E in the Pancake domain. The AS optimization substantially increases the informativeness of F2A. With VBi-HS, expansions decrease by $8.23\times$ relative to F2E, and by $4.21\times$ under NBS.

\section{Conclusion}



We introduced front-to-attractors (F2A), a heuristic for bidirectional search that preserves much of the guidance of front-to-front (F2F) while significantly reducing computational cost. By estimating the distance to a set of attractors rather than all frontier states, F2A requires minimal overhead and integrates easily into existing algorithms. Across multiple domains, it reduced heuristic evaluations by up to $11.2\times$ over F2F and achieved $4.8\times$ fewer expansions than F2E, while maintaining competitive runtimes, making it a practical approach for scalable bidirectional search.

\section{Acknowledgments}
The research was supported by the National Science Foundation by grant IIS-2328671. The views and conclusions in this document are those of the authors and should not be interpreted as representing the official policies, either expressed or implied, of the sponsoring organizations, agencies, or the U.S. government.

\bibliography{aaai2026}

\newpage
\section{Appendix}
\subsection{Delta Values}
We experimented with a range of $\delta$ values and observed that performance is domain dependent. In grid pathfinding, delta values in the range of 20–100 produced similar results, whereas in Sliding Tiles and Pancake the choice of delta was more sensitive. Empirically, choosing delta as a fraction (typically less than one-half) of the solution length yielded good trade-offs, while larger values often caused degeneration toward F2E. These strategies are intended to balance heuristic informativeness against heuristic computation cost, and this balance is inherently domain-specific. \blue{However, the qualitative conclusion remains unchanged across the tested range: smaller delta moves F2A toward F2F-like behavior with more heuristic evaluations, while larger delta reduces the effect of the optimization, with behavior converging to naive F2A, which is more F2E-like in the presence of degeneration. Sliding Tiles and Pancake are more delta-sensitive than grid pathfinding because they have exponential state-space growth and short solution lengths. This is especially pronounced in Pancake due to its higher branching factor.}

\subsection{Degenerating to F2E}
For the Sliding Tiles and Pancake domains we observed degeneration to F2E. \red{This occurs because the state space grows exponentially, making it likely that $start$ or $goal$ becomes an attractor with an assigned frontier state that is never expanded, causing them to persist indefinitely in $Attrs_D$.}\blue{ In permutation domains with exponential state spaces, states close to the start may lie on branches that are never expanded, causing stale attractors near the start/goal to remain and reducing heuristic informativeness.  The NA and AS optimizations are designed to prevent F2A from degenerating to F2E by ensuring that the representatives used in heuristic computation remain close to the frontier.}

\blue{NA does not explicitly remove such attractors, but rather re-associates states to a closer attractor when their parent is expanded. A stale attractor is removed only after all associated states are expanded (and their successors re-associated with closer attractors). As a result, NA is less effective in the Pancake domain, due to its higher branching factor (14 vs. 4), which increases the likelihood of persistent low-g frontier states that are never expanded.}\red{Compared to the Sliding Tiles, this effect is amplified in Pancake by the Pancake puzzle’s larger branching factor (14 versus 4), which increases the probability that distant attractors remain linked to unexpanded nodes, thereby limiting the effectiveness of the NA strategy. The AS strategy overcomes this issue by directly leveraging the associated frontier states during heuristic evaluation, allowing it to maintain informative guidance even in this high-branching, combinatorial domain.}

\blue{AS mitigates this by explicitly disabling attractors far from the frontier and directly using their associated frontier states instead for heuristic calculation, thereby preserving heuristic quality. However, this increases heuristic evaluations. The experimental results reflect this trade-off: in Sliding Tiles, naive F2A matches F2E in expansions, while NA reduces expansions; in Pancake, both naive F2A and NA behave like F2E, suggesting in this domain NA does not prevent degeneration, whereas AS significantly reduces expansions (8.2$\times$ in VBi-HS and 4.3$\times$ in NBS) but incurs higher heuristic computation (278.9$\times$ in VBi-HS and 496.7$\times$ in NBS).}

\subsection{Additional Results}

\blue{Below we include the runtime plots for adding an artificial delay to every state expansion in the Sliding Tiles domain. Fig.~\ref{fig:delay_plot_VBi-HS} shows the runtime curves for VBi-HS while Fig.~\ref{fig:delay_plot_NBS} shows the runtime curves for NBS. As shown in both figures, there is a performance window where an F2A variant has the best runtime across all planners (approximately after $0.6$ ms for VBi-HS (AS) and approximately between $0.6$ and $5$ ms for NBS (NA)).}
\begin{figure}[!t]
    \centering
    \includegraphics[width=\linewidth]{planner_time_vs_expansion_delay_tiles_VBi-HS_logscale.png}
    \caption{Sliding Tiles with an added delay per expansion (VBi-HS).}
    \label{fig:delay_plot_VBi-HS}
\end{figure}
\begin{figure}[!t]
    \centering
    \includegraphics[width=\linewidth]{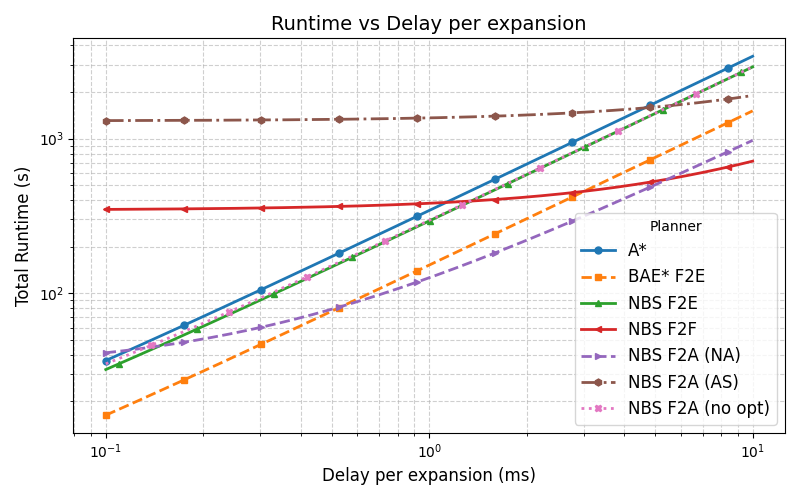}
    \caption{Sliding Tiles with an added delay per expansion (NBS).}
    \vspace{-0.3cm}
    \label{fig:delay_plot_NBS}
\end{figure}

\subsection{Applying F2A Heuristics to Bi-HS Algorithms}
\blue{F2A heuristics are dynamic and change based on the opposite search frontier similarly to F2F heuristics. Because of this, F2A heuristics can be easily adapted to Bi-HS algorithms that support F2F heuristics. However, it is difficult to adapt F2A heuristics to Bi-HS algorithms that do not support F2F heuristics. One such example is BAE*. BAE* cannot directly support F2F or F2A heuristics because it assumes fixed heuristic values assigned at node generation, whereas F2F/F2A values are dynamic and depend on the evolving opposite frontier. In BAE*, both forward and backward heuristic values must be computed when a node is generated and remain fixed thereafter; these values are then used in its error estimates and lower-bound maintenance. F2E heuristics satisfy this requirement. In contrast, for a node generated in one direction, its F2F/F2A value with respect to the opposite direction depends on future changes to the current frontier. We could not design a BAE* adaptation that both retains the benefits of F2F/F2A and preserves BAE*’s optimality assumptions.}

\blue{Other Bi-HS frameworks such as NBS and VBi-HS do not have the same fixed heuristic value requirement in this form, and so we adapted them to support F2F and F2A heuristics.}

\subsection{Maintaining Attractors}
In Attractor Based Closed List Search (ACLS) \cite{zou2025attractor}, attractors are maintained such that at any point during the search the optimal path from the start to any state in the Open list can be reconstructed through a series of greedy traces. The high level idea is that during every expansion the search determines if the state $s$ that is being expanded is required to reconstruct the current best path to any of its successors $s'$. If it is, $s$ is stored as an attractor.

When expanding $s$, if a better path to one of its successors $s'$ is discovered, we must be able to reconstruct this new path to $s'$. To minimize the number of attractors saved, we attempt to assign $s'$ the same attractor as $s$. For $s'$ to inherit $s$'s attractor, it is necessary that $s'$'s greedy tracing through this attractor leads to $s$. If this condition is met, $s$ can be deleted without affecting the path reconstruction. However, in the case of using attractors for the F2A heuristic states are not deleted, as memory is not the constraint and states can be potentially re-expanded. If this condition is not met, then $s$ itself is assigned as the attractor for $s'$, in which case it is saved with its parent attractor set to $s.attractor$. By assigning $s'$'s attractor in this way, the best path to $s'$ is always recoverable.

Attractor maintenance relies on a data structure called $Attrs_D$. This structure maintains the list of attractors and a counter representing the number of states assigned to each attractor. Unlike ACLS, the F2A heuristic only relies on the set of attractors that have at least one associated state in the Open list \blue{(i.e., the active attractors)}. As a result, the counter only keeps track of how many states in $Open_D$ are assigned to it, but not child attractors. These counters are updated whenever states are assigned, reassigned, or removed from $Open_D$. When an attractor no longer has any frontier states associated with it (its counter reaches zero), it is removed from $Attrs_D$.

When multiple paths of the same cost lead to a state $s$, we tiebreak by selecting the path that assigns an attractor farthest from $s$. Here, ``farther away" is quantified as the attractor with a higher $\mathbf{h_{dist}}$ value relative to $s$. For example, consider two parents $s'$ and $s''$, both sharing the same attractor $t$. If $s'$ is the greedy predecessor of $s$ based on $t$ (due to deterministic tiebreaking), but $s''$ was expanded first, $s''$ might be unnecessarily added to the $Attrs$ structure and assigned as $s$'s attractor instead of $t$. To prevent such scenarios, we tiebreak by favoring the path with the attractor that is farther away, such that $t$ will be assigned as $s$'s attractor once $s'$ is expanded. Alternatively, it is possible to bypass the tiebreaking routine altogether at the expense of saving more attractors. 

\subsection{Proof of Optimality}

In this section we will prove the optimality of Alg. 1. We will follow the same structure as \citet{holte2017mm}, with the notable difference being Theorem \ref{theorem:L<=C*}. Let $C^*=c(start,goal)$ be the optimal cost.

\begin{definition} \label{def:sij_appendix}
For any optimal path $P=s_0,s_1,...s_n$ from $start$ $(s_0)$ to $goal$ $(s_n)$, let $i$ be the largest index such that $s_k \in Closed_F$ $\forall k\in[0,i-1]$, and let $j$ be the smallest index such that $s_k \in Closed_B$ $\forall k\in[j+1,n]$. We say $P$ ``has not been found" if $i < j$ and $P$ ``has been found" otherwise.
\end{definition}

\begin{definition}
Node $n$ is "permanently closed" in the forward direction if $n \in Closed_F$ and $g_F(n) = c(start,n)$. Likewise, $n$ is permanently closed in the backward search direction if $n \in Closed_B$ and $g_B(n) = c(n,goal)$.
\end{definition}

The name "permanently closed" is based on the following lemma.

\begin{lemma} \label{lemma:permanently_closed}
If node $n$ is permanently closed in a particular search direction at the start of some iteration, it will be permanently closed in that direction at the start of all subsequent iterations.
\end{lemma}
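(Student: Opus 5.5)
The plan is an induction on iterations of Alg.~\ref{alg:f2a}, treating the forward direction; the backward case is symmetric. Suppose $n$ is permanently closed in the forward direction at the start of some iteration, so $n \in Closed_F$ and $g_F(n)=c(start,n)$. It suffices to show that the same holds at the start of the next iteration, and then induct. An iteration affects the forward sets only through one call to \textsc{Expand}($s$, $F$); a backward iteration does not touch $Closed_F$ or $g_F$ at all. The attractor bookkeeping in \textsc{UpdateAttractor} and the removal of unassigned attractors (line~\ref{alg:remove_attractors}) modify only attractor fields and $Attrs_D$, never $g_F$, $Open_F$ or $Closed_F$. So the only operations that could break the property are the $g$-update on line~\ref{alg:f2a_update_succ_cost} and the removal of $s'$ from $Open_D \cup Closed_D$ that follows it.

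The key step is to show these operations leave $n$ in $Closed_F$ with the same $g$-value. Both are triggered only when $n = s'$ is a successor of the expanded state $s$ and $g_F(s) + c(s,n) \le g_F(n)$. Two invariants hold throughout: every finite $g_F$-value is the cost of an actual path from $start$, and for a successor, $c(s,n)$ is the edge cost. Together they give $g_F(s) + c(s,n) \ge c(start,n) = g_F(n)$. Hence the condition can fire only with equality, so the assignment writes back the same value $c(start,n)$. The strict-inequality case, which is the one that would reopen a node, cannot occur.

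The main obstacle is that the guard on line~\ref{alg:f2a_check_succ_cost} uses $\geq$ rather than $>$. In the equality case the code does remove $n$ from $Closed_F$ and reinserts it into $Open_F$ (possibly reassigning its attractor, as in the tie-handling of the appendix). Strictly read, the lemma then fails at the start of the next iteration. I would handle this by noting that an equal-cost rediscovery is a tie handled purely for attractor bookkeeping, with the tiebreaking of \citet{zou2025attractor}. Because $g_F(n)$ is unchanged, I would argue that the intended behaviour does not reopen a closed node on a tie, so the closed-set membership is retained. If a literal reading forces reopening, I would fall back to a weaker statement, namely that $g_F(n)=c(start,n)$ is preserved forever and $n$ stays closed except under zero-cost re-generation. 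I would also check that this weaker form is all that Theorem~\ref{theorem:sketch} needs, since it uses only the optimal $g$-values of the path prefix.
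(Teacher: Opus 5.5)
Your core argument is the paper's: $g_F(n)$ changes only together with removal from $Closed_F$ through the guard on line~\ref{alg:f2a_check_succ_cost}, and since $g_F(n)=c(start,n)$ no strictly cheaper path exists to trigger it. Your $\geq$ caveat is correct, and the paper's proof glosses over it by simply asserting that removal happens ``only if a strictly cheaper path to $n$ is found.'' So your no-reopening-on-ties reading is exactly the assumption the paper makes silently, and the statement is literally false for the pseudocode whenever a closed node is regenerated at equal cost. If you take the fallback route instead, the fact Lemma~\ref{lemma:permanently_closed_optimal_succ} needs is that a tie puts $n$ straight back into $Open_F$ with unchanged optimal $g_F$; also, ``zero-cost re-generation'' should read ``equal-cost re-generation.''
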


\begin{proof}
This proof is for the forward search, the proof for the backward search is analogous. Alg. 1 does not change $g_F(n)$ while $n\in Closed_F$, so $n$ can only stop being permanently closed by being removed from $Closed_F$. This is possible but only if a strictly cheaper path to $n$ is found, which is not possible since $g_F(n) = c(start,n)$ for a node permanently closed in the forward direction. Therefore, once $n$ is permanently closed in the forward direction it will remain so.
\end{proof}

\begin{lemma} \label{lemma:permanently_closed_optimal_succ}
Let $P=s_0,s_1,...s_n$ be an optimal path from $start=s_0$ to any state $s_n$. If $s_n$ is not permanently closed in the forward direction and either $n=0$ or $n>0$ and $s_{n-1}$ is permanently closed in the forward direction, then $s_n\in Open_F$ and $g_F(s_n)=c(start,s_n)$. Analogously, let $P=s_0,s_1,...s_n$ be an optimal path from any state $s_0$ to $goal=s_n$. If $s_0$ is not permanently closed in the backward direction and either $n=0$ or $n>0$ and $s_1$ is permanently closed in the backward direction, then $s_0\in Open_B$ and $g_B(s_0)=c(s_0,goal)$.
\end{lemma}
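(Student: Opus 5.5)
I will prove the forward statement; the backward statement follows by the symmetric argument with $Open_B$, $Closed_B$ and $g_B$ in place of their forward counterparts. I split into the two cases allowed by the hypothesis.

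\emph{Case $n=0$.} Here $s_n=start$ and $c(start,start)=0$. At initialization $start\in Open_F$ with $g_F(start)=0$. Since $g$-values never drop below true costs and Alg.~1 only lowers $g_F$, we have $g_F(start)=0$ at every later point, i.e., $g_F(start)=c(start,start)$. The only way $start$ leaves $Open_F$ is by being expanded, which moves it to $Closed_F$. At that moment it would be permanently closed, and by Lemma~\ref{lemma:permanently_closed} it would stay permanently closed, contradicting the hypothesis. Hence $start\in Open_F$ with the correct $g$-value.

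\emph{Case $n>0$.} Since $s_{n-1}$ is permanently closed, it was expanded at some iteration while $g_F(s_{n-1})=c(start,s_{n-1})$. To justify this, I will note that $g_F(s_{n-1})$ cannot change while the node sits in $Closed_F$, so its value at the start of the current iteration equals its value at its most recent expansion. During that expansion, line~\ref{alg:f2a_check_succ_cost} compares $g_F(s_n)$ with $g_F(s_{n-1})+c(s_{n-1},s_n)=c(start,s_n)$, where the equality uses that $P$ is optimal and so every prefix of $P$ is optimal. Because $g_F(s_n)\ge c(start,s_n)$ always, after this step $g_F(s_n)=c(start,s_n)$. Afterwards $s_n$ is either in $Open_F$ (it is inserted if absent) or already permanently closed. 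I also need that the $\ge$ test, which may reopen a node on equal-cost paths, does not break this: in that situation it reinserts $s_n$ into $Open_F$ with the same optimal $g$-value, so the conclusion is preserved.

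From that moment to the current iteration I will argue an invariant: $g_F(s_n)$ stays equal to $c(start,s_n)$, since it can never go lower. Moreover, $s_n$ leaves $Open_F$ only by being expanded with an optimal $g$-value, which makes it permanently closed, and by Lemma~\ref{lemma:permanently_closed} it would then remain so, contradicting the hypothesis. Therefore $s_n\in Open_F$ with $g_F(s_n)=c(start,s_n)$.

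The main obstacle is the bookkeeping on the $g$-value of $s_{n-1}$. I must ensure it had its optimal value at the \emph{last} time it was expanded, and not merely at some earlier expansion. Re-expansion, meaning a removal from Closed and a later expansion, only happens with a $g$-value no larger than before. So the last expansion uses the current value, which is optimal by assumption. I also need that attractor updates never modify $g$-values or Open/Closed membership, so that they can be ignored in this argument.
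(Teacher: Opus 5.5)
Your proof is correct and follows essentially the paper's route: the $n=0$ case comes from initialization, and for $n>0$ you use the (last) expansion of $s_{n-1}$ with optimal $g_F$, which gives $s_n$ the value $c(start,s_n)$ and places it in $Open_F$; after that $g_F(s_n)$ cannot decrease, and $s_n$ cannot leave $Open_F$ without becoming permanently closed, contradicting the hypothesis via Lemma~\ref{lemma:permanently_closed}. Your extra bookkeeping fills in details the paper glosses over, and one small remark: the $\ge$ test you point out can also reopen \emph{closed} nodes on equal-cost paths, so the cleanest finish is the invariant that $s_n$ always lies in $Open_F\cup Closed_F$ with $g_F(s_n)=c(start,s_n)$, which gives the claim directly without appealing to Lemma~\ref{lemma:permanently_closed}.
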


\begin{proof}
This proof is for the forward search, the proof for the backward search is analogous. If $n=0$, $s_0=start$ has not been closed in the forward direction and the lemma is true because line \ref{alg:f2a_set_open} put $start \in Open_F$ with $g_F(start) = c(start,start)=0$. Suppose $n>0$. When $s_{n-1}$ was expanded and permanently closed in the forward direction $s_n$ was generated via an optimal path (in lines \ref{alg:f2a_check_succ_cost} and \ref{alg:f2a_update_succ_cost}), $g_F(s_n)=g_F(s)+c(s,s')=c(start,s_{n-1})+c(s_{n-1},s_n)=c(start,s_n)$. $s_n$ cannot have been permanently closed in the forward direction at that time because if it was, it still would be (Lemma \ref{lemma:permanently_closed}). If $s_n\in Closed_F \cup Open_F$ at that time with a suboptimal $g$-value, then it would have been removed from $Closed_F \cup Open_F$ and added to $Open_F$ with $g_F(s_n)=c(start,s_n)$. If $s_n\notin Closed_F \cup Open_F$ at that time, it would likewise have been added to $Open_F$ with $g_F(s_n)=c(start,s_n)$. Finally, if $s_n \in Open_F$ at that time with $g_F(s_n)=c(start,s_n)$, it would have remained so. Therefore, no matter what $s_n$'s status was at the time $s_{n-1}$ was expanded to become permanently closed in the forward direction, at the end of that iteration $s_n \in Open_F$ and $g_F(s_n)=c(start,s_n)$. In subsequent iterations $g_F(s_n)$ cannot have changed, since that only happens if a strictly cheaper path to $s_n$ is found, which is impossible. It also cannot have been closed, since if that had happened it would now be permanently closed.
\end{proof}

\begin{lemma} \label{lemma:nFB}
Let $P=s_0,s_1,...s_n$ be an optimal path from $start$ $(s_0)$ to any state $s_n$. If $s_n$ is not permanently closed in the forward direction then there exists an $i$ $(0 \le i \le n)$ such that $s_i \in Open_F$ and $g_F(s_i)=c(start,s_i)$. Let $i_{min}$ be the smallest such $i$ and define $n_F$ (for path $P$) to be $s_{i_{min}}$. Analogously, let $P=s_0,s_1,...s_n$ be an optimal path from any state $s_0$ to $goal=s_n$. If $s_0$ is not permanently closed in the backward direction then there exists a $j$ $(0 \le j \le n)$ such that $s_j \in Open_B$ and $g_B(s_j)=c(s_j,goal)$. Let $j_{max}$ be the largest such $j$ and define $n_B$ (for path $P$) to be $s_{j_{max}}$.
\end{lemma}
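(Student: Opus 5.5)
The plan is to prove the forward statement by a minimality argument along the path $P$ and to obtain the backward statement by symmetry. Let $P=s_0,\dots,s_n$ be an optimal path from $start$ to $s_n$, and assume $s_n$ is not permanently closed in the forward direction.

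Define $k$ as the smallest index in $[0,n]$ such that $s_k$ is not permanently closed in the forward direction. This index exists because $s_n$ itself qualifies.

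Every prefix of an optimal path is optimal, so $s_0,\dots,s_k$ is an optimal path from $start$ to $s_k$. There are two cases, each feeding the hypotheses of Lemma~\ref{lemma:permanently_closed_optimal_succ}.
\begin{itemize}
\item If $k=0$, then $s_0=start$ is not permanently closed, which is exactly the $n=0$ case of that lemma.
\item If $k>0$, then by minimality of $k$ the node $s_{k-1}$ is permanently closed in the forward direction, while $s_k$ is not. This is the $n>0$ case of the lemma.
\end{itemize}
In either case Lemma~\ref{lemma:permanently_closed_optimal_succ} gives $s_k\in Open_F$ and $g_F(s_k)=c(start,s_k)$. Taking $i=k$ proves existence.

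The set of indices $i$ with $s_i\in Open_F$ and $g_F(s_i)=c(start,s_i)$ is a nonempty finite subset of $\{0,\dots,n\}$. It therefore has a least element $i_{min}$, so $n_F$ is well defined.

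For the backward statement, let $P=s_0,\dots,s_n$ be an optimal path from $s_0$ to $goal$ with $s_0$ not permanently closed in the backward direction. Take $k$ to be the largest index such that $s_k$ is not permanently closed in the backward direction; it exists because $s_0$ qualifies. The suffix $s_k,\dots,s_n$ is an optimal path to $goal$. Either $k=n$, or $s_{k+1}$ is permanently closed in the backward direction by maximality of $k$. The backward half of Lemma~\ref{lemma:permanently_closed_optimal_succ}, applied to this suffix, then gives $s_k\in Open_B$ with $g_B(s_k)=c(s_k,goal)$. The largest such index gives $n_B$.

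The only step needing care is applying Lemma~\ref{lemma:permanently_closed_optimal_succ} to the prefix (respectively suffix) of $P$. This is legitimate because subpaths of optimal paths are optimal, and the lemma is stated for an optimal path to any state.

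The argument uses nothing about the heuristic. In particular, the attractor bookkeeping plays no role, since the lemma concerns only $g$-values and Open/Closed membership.
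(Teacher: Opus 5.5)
Your proof is correct and follows the same route as the paper: find the point on $P$ where a permanently closed node is followed by one that is not, then apply Lemma~\ref{lemma:permanently_closed_optimal_succ} to the corresponding prefix (or suffix). The paper takes the largest permanently closed index $k$ and uses $s_{k+1}$, handling $start \notin Closed_F$ as a separate case, whereas your choice of the smallest non-permanently-closed index absorbs that case into $k=0$; you also write out the backward direction explicitly rather than calling it analogous.
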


\begin{proof}
This proof is for the forward search, the proof for the backward search is analogous. If $start \notin Closed_F$, then $i=0$ has the required properties. Suppose $start \in Closed_F$. Let $k$ $(0 \le k \le n)$ be the largest index such that $s_k$ is permanently closed. Such a $k$ must exist because $start$ $(k=0)$ is permanently closed. By Lemma \ref{lemma:permanently_closed_optimal_succ} $s_{k+1} \in Open_F$ and $g_F(s_{k+1})=c(start,s_{k+1})$. Therefore $i=k+1$ has the required properties. 
\end{proof}

The following lemma shows that for an optimal path $P$ from $start$ to $goal$, $n_F$ and $n_B$ for $P$ are exactly the states $s_i$ and $s_j$ for $P$ defined in Definition \ref{def:sij_appendix}.

\begin{lemma} \label{lemma:nFB_exist}
Let $P=s_0,s_1,...s_n$ be an optimal path from $start$ $(s_0)$ to $goal$ $(s_n)$ that has not been found. Then $n_F$ and $n_B$, as defined in Lemma \ref{lemma:nFB}, both exist for $P$ and $n_F=s_i$ and $n_B=s_j$, where $s_i$ and $s_j$ are as defined in Definition \ref{def:sij_appendix}.
\end{lemma}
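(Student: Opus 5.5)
We argue for the forward direction; the backward case is symmetric. Fix an optimal path $P=s_0,\dots,s_n$ that has not been found, so $i<j$ in Definition~\ref{def:sij_appendix}. By the maximality of $i$, every $s_k$ with $k<i$ lies in $Closed_F$ and $s_i\notin Closed_F$. The plan has four steps.

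\textbf{Step 1: the prefix is permanently closed.} Each $s_k$ with $k<i$ is in $Closed_F$, but we need $g_F(s_k)=c(start,s_k)$. We proceed by induction on $k$ using Lemma~\ref{lemma:permanently_closed_optimal_succ}.
\begin{itemize}
\item \emph{Base case.} $start\in Closed_F$ when $i\ge 1$. Its $g$-value is $0$, since it can never be improved, so $start$ is permanently closed.
\item \emph{Inductive step.} Suppose $s_{k-1}$ is permanently closed but $s_k$ is not, for some $k<i$. Then Lemma~\ref{lemma:permanently_closed_optimal_succ} gives $s_k\in Open_F$, which contradicts $s_k\in Closed_F$, because a state is never simultaneously in Open and Closed.
\end{itemize}
Hence $s_0,\dots,s_{i-1}$ are all permanently closed.

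\textbf{Step 2: $s_i$ is not permanently closed, and $s_i\in Open_F$ optimally.} Step 1 already gives $s_i\notin Closed_F$, so $s_i$ is not permanently closed. Apply Lemma~\ref{lemma:nFB} to the subpath $s_0,\dots,s_i$: this shows that $n_F$ exists for this subpath. Lemma~\ref{lemma:permanently_closed_optimal_succ} (using the case $i=0$, or $s_{i-1}$ permanently closed) then gives $s_i\in Open_F$ with $g_F(s_i)=c(start,s_i)$.

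\textbf{Step 3: existence of $n_F$ for the full path $P$.} Lemma~\ref{lemma:nFB} requires that $s_n=goal$ is not permanently closed in the forward direction. We show this as follows. Suppose $goal$ were forward-closed. Then $i>n$ would be impossible, because $i\le n$ holds whenever $s_i$ exists. The real obstacle is showing that some state on $P$ qualifies at all, and Step 2 already supplies $s_i$ as such a state. So it is cleaner to argue existence directly: the set of indices $k$ with $s_k\in Open_F$ and $g_F(s_k)=c(start,s_k)$ is nonempty because it contains $i$.

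\textbf{Step 4: minimality, $i_{min}=i$.} For every $k<i$ we have $s_k\in Closed_F$, hence $s_k\notin Open_F$. So no index smaller than $i$ qualifies, and therefore $n_F=s_i$.

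\textbf{Backward direction.} The same four steps apply with $s_j$. The suffix $s_{j+1},\dots,s_n$ is backward-closed, so it is permanently closed by induction from $goal$. Then $s_j\in Open_B$ optimally. Maximality gives $n_B=s_j$, since all larger indices are in $Closed_B$.

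\textbf{Where the difficulty lies.} The main point needing care is Step 1: upgrading ``in $Closed_F$'' to ``permanently closed''. This rests on the disjointness of Open and Closed, together with the optimal-successor lemma. The hypothesis $i<j$ is used only to ensure that both indices refer to genuine, distinct path states, so that $s_i\ne s_j$ and both lie strictly inside the unclosed part of $P$. Case $i=0$ (or $j=n$) is handled by the $n=0$ clause of Lemma~\ref{lemma:permanently_closed_optimal_succ}.
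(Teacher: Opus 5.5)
Your proof is correct, but it reaches $s_i\in Open_F$ with $g_F(s_i)=c(start,s_i)$ by a somewhat different decomposition than the paper.

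The paper applies Lemma~\ref{lemma:nFB} as a black box to the subpath $s_0,\dots,s_i$; its hypothesis holds because $s_i\notin Closed_F$. This yields \emph{some} index $i'\le i$ with $s_{i'}\in Open_F$ and optimal $g$-value. Since $s_0,\dots,s_{i-1}\in Closed_F$ and Open and Closed are disjoint, $i'=i$, and minimality over $[0,i]$ then carries over to $[0,n]$.

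You instead first prove the stronger fact that the whole prefix $s_0,\dots,s_{i-1}$ is permanently closed, by induction using Lemma~\ref{lemma:permanently_closed_optimal_succ} plus disjointness. You then apply Lemma~\ref{lemma:permanently_closed_optimal_succ} directly to $s_i$. In effect, you re-derive the proof of Lemma~\ref{lemma:nFB} for this particular subpath.

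The paper's route is shorter: disjointness alone pins down the index, and nothing about the prefix's $g$-values is needed. Your route makes explicit that the closed prefix of an unfound optimal path carries optimal $g$-values, which the later lemmas rely on only informally. Your Step~4 coincides with the paper. So does the direct argument you settle on in Step~3, that the qualifying index set is nonempty because it contains $i$; the paper likewise never checks that $goal$ is not forward-permanently-closed.

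A few clean-ups:
\begin{itemize}
\item Drop the first half of Step~3 about $goal$ being forward-closed; it is not an argument.
\item The fact $s_i\notin Closed_F$ follows from the maximality of $i$ together with $i<j\le n$, so that $i+1$ is a valid index. It does not come from Step~1.
\item That use of $i<j$, and the analogous $j\ge 1$ giving $s_j\notin Closed_B$, is the real role of the hypothesis, not ensuring $s_i\ne s_j$.
\item The sentence in Step~2 invoking Lemma~\ref{lemma:nFB} on the subpath is redundant once you apply Lemma~\ref{lemma:permanently_closed_optimal_succ} directly.
\end{itemize}
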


\begin{proof}
Let $i$ and $j$ be as in Definition \ref{def:sij_appendix}. For the forward search, $s_0,s_1,...s_i$ is an optimal path from $start$ to $s_i$ and $s_i \notin Closed_F$ and therefore is not permanently closed in the forward direction. Therefore, $s_0,s_1,...s_i$ satisfies Lemma \ref{lemma:nFB} for the forward direction and $n_F=s_{i'}$ exists for path $s_0,s_1,...s_i$. Because $s_0,s_1,...s_{i-1}$ are all in $Closed_F$, it must be that $i'=i$. Since $i'$ is the smallest index between 0 and $i$ such that $s_{i'} \in Open_F$ and $g_F(s_{i'}) = c(start, s_{i'})$, it is also the smallest index between 0 and $n$ with these properties, so $s_{i'}$ is also $n_F$ for path $P$. For the backward search the reasoning is analogous.
\end{proof}

\begin{theorem} \label{theorem:L<=C*}
If, at the beginning of an iteration of Alg. 1, there exists an optimal path $P$ from $start$ to $goal$ that has not been found, then $L \le C^*$.
\end{theorem}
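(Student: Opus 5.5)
We use Lemma~\ref{lemma:nFB_exist} to pin down two frontier states on $P$, bound $C^*$ from below by an F2F-type quantity, and then show that the F2A value of the forward frontier state is no larger. We take $L$ to be the standard lower bound computed by \textsc{UpdateLB}, in particular $L \le \min_{s\in Open_F} f_F(s)$ with $f_F$ the F2A $f$-value (the backward case, or a bound such as $\max$ of both directions' minima, is symmetric).

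\textbf{Step 1 (frontier states on $P$).} Since $P$ has not been found, Lemma~\ref{lemma:nFB_exist} gives $n_F=s_i\in Open_F$ with $g_F(s_i)=c(start,s_i)$ and $n_B=s_j\in Open_B$ with $g_B(s_j)=c(s_j,goal)$, where $i<j$. Because $P$ is optimal and $s_i$ precedes $s_j$ on it, $C^* = g_F(s_i)+c(s_i,s_j)+g_B(s_j)$. Admissibility of the pairwise heuristic (implied by bi-consistency) then gives $C^*\ge g_F(s_i)+h(s_i,s_j)+g_B(s_j)$.

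\textbf{Step 2 (attractor invariant; the main obstacle).} We must show the following for every $s'\in Open_B$: there is an active attractor $a\in Attrs_B$, namely $s'$'s assigned attractor $a(s')$, with $g_B(s') \ge c(s',a)+g_B(a)$. Equality, as in the sketch, is not needed. We argue by induction over \textsc{UpdateAttractor} calls. When $s'$ receives $g_B(s')=g_B(s)+c(s,s')$, there are two cases.
\begin{itemize}
\item If $s'$ is assigned $s$ itself, then $g_B(s')=c(s',s)+g_B(s)$ holds directly, using the edge cost as an upper bound on $c$.
\item If $s'$ inherits $s.attractor=a$, the induction hypothesis gives $g_B(s)\ge c(s,a)+g_B(a)$. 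The triangle inequality $c(s',a)\le c(s',s)+c(s,a)$ then yields $g_B(s')\ge c(s',a)+g_B(a)$.
\end{itemize}
Two further points need care. First, $g_B(a)$ must not change afterwards in a way that breaks the inequality. It can only decrease, and a decrease only strengthens it. Second, $a$ must remain in $Attrs_B$. This holds because $a$ has the assigned open state $s'$, so line~\ref{alg:remove_attractors} does not remove it. The greedy-tracing condition is not needed for this inequality; only the assignment rule matters.

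\textbf{Step 3 (chaining).} Take $a=a(s_j)$. Bi-consistency gives $h(s_i,a)\le h(s_i,s_j)+c(s_j,a)$. Combining this with Step 2,
\[
h(s_i,a)+g_B(a)\le h(s_i,s_j)+c(s_j,a)+g_B(a)\le h(s_i,s_j)+g_B(s_j).
\]
Hence $h_F(s_i)=\min_{a'\in Attrs_B}(h(s_i,a')+g_B(a'))\le h(s_i,s_j)+g_B(s_j)$.

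One subtlety remains: $h_F(s_i)$ was computed at generation time against an earlier $Attrs_B$. We handle this by noting that it suffices to evaluate the bound with the current attractor set, which is how $L$ is defined via current minima. Alternatively, for a lagged value, the inequality still holds for the attractor set at that time, combined with monotone decrease of $g_B$ and the invariant.

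With either treatment of the lag, $L\le f_F(s_i)\le g_F(s_i)+h(s_i,s_j)+g_B(s_j)\le C^*$.
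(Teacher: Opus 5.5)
Your proof is correct and follows the paper's argument: you locate $n_F=s_i$ and $n_B=s_j$ via Lemma~\ref{lemma:nFB_exist}, lower-bound $C^*$ by $g_F(s_i)+h(s_i,s_j)+g_B(s_j)$ using admissibility, and then replace $s_j$ by an attractor in $Attrs_B$ using consistency together with the attractor invariant, exactly as the paper does. The differences are refinements rather than a new route. You prove the invariant by induction over \textsc{UpdateAttractor} in the weaker form $g_B(s')\ge c(s',a)+g_B(a)$, which, unlike the paper's asserted equality ``by construction,'' survives later decreases of $g_B(a)$. Your first treatment of the generation-time lag in $h_F$ coincides with the paper's implicit convention of evaluating $f_F$ against the current $Attrs_B$. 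Your alternative lag argument, however, does not need monotonicity of $g_B$, which points the wrong way; it needs Lemma~\ref{lemma:nFB} applied at generation time to the suffix of $P$ starting at $s_j$, which cannot have been permanently closed then.
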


\begin{proof}
 Let $n_F=s_i$ and $n_B=s_j$ be as defined in Lemma \ref{lemma:nFB}. Lemma \ref{lemma:nFB_exist} guarantees that $n_F$ and $n_B$ exist for $P$. Because $i<j$, with an admissible pairwise heuristic $C^* = g_F(s_i)+c(s_i,s_j)+g_B(s_j) \ge g_F(s_i)+h(s_i,s_j)+g_B(s_j)$. With the F2A heuristic, $f_F(s) = g_F(s) + \min_{a \in Attr_B}\bigl(h(s,a)+g_B(a)\bigr)$. By construction of attractors, for every $s' \in OPEN_B$ there exists $a \in Attr_B$ with $g_B(s')=c(s',a)+g_B(a)$. With a consistent pairwise heuristic, $h(s,a) \le h(s,s')+c(s',a)$, which implies $h(s,a) + g_B(a) \le h(s,s') + g_B(s')$. Taking minima over attractors yields $f_F(s) \le g_F(s) + \min_{s' \in OPEN_B} \bigl(h(s,s') + g_B(s')\bigr)$. Therefore, $L=\min_{s \in OPEN_F}f_F(s) \le \min_{s \in OPEN_F,\, s' \in OPEN_B} \bigl(g_F(s) + h(s,s') + g_B(s')\bigr) \le g_F(s_i) + h(s_i,s_j) + g_B(s_j) \le C^*.$   
\end{proof}

\begin{lemma} \label{lemma:no_termination_until_found}
If there exists a path from $start$ to $goal$, Alg. 1 will not terminate until at least one optimal path from $start$ to $goal$ has been found.
\end{lemma}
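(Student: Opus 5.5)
We argue by contradiction, following the structure of the corresponding lemma in \citet{holte2017mm}. Suppose a path from $start$ to $goal$ exists, and suppose Alg.~1 terminates at the start of some iteration while every optimal path from $start$ to $goal$ is still ``not found'' in the sense of Definition~\ref{def:sij_appendix}. Termination happens for one of two reasons: $Open_F$ or $Open_B$ is empty, or $U \le L$. We rule out each case separately.

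\emph{Case 1: an Open list is empty.} Fix any optimal path $P$ that has not been found. By Lemma~\ref{lemma:nFB_exist}, both $n_F=s_i$ and $n_B=s_j$ exist for $P$. In particular $s_i \in Open_F$ and $s_j \in Open_B$, so neither Open list is empty.

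\emph{Case 2: $U \le L$.} By Theorem~\ref{theorem:L<=C*}, $L \le C^*$ at the beginning of this iteration. The remaining step is to show $U > C^*$ at that point; together these give $U > C^* \ge L$, so the termination test fails.

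Showing $U > C^*$ is the main obstacle. $U$ is only ever lowered at line~\ref{alg:update_mincost}, to a value $g_F(s')+g_B(s')$ for some $s' \in Open_{OD}$ that was just generated in direction $D$. Each $g$-value is the cost of an actual path, so $U$ always equals $\infty$ or the cost of a real $start$--$goal$ path through a meeting state $m$, with $U = g_F(m)+g_B(m) \ge C^*$. Suppose for contradiction that $U = C^*$. Then at the moment of that update, $g_F(m)=c(start,m)$ and $g_B(m)=c(m,goal)$, and concatenating the recorded optimal forward path to $m$ with the recorded optimal backward path from $m$ yields an optimal path $P'$.

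We then need to argue that $P'$ ``has been found.'' At the time of the update, every predecessor of $m$ on the forward part of $P'$ is in $Closed_F$ (each was expanded to produce the $g$-values along it), and symmetrically every successor of $m$ on the backward part is in $Closed_B$. In the worst case $m$ itself is open in both directions, giving indices $i = j$ at $m$, so $i \ge j$. Lemma~\ref{lemma:permanently_closed} ensures these optimally reached nodes stay permanently closed at later iterations, so $P'$ remains ``found.''

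The delicate point is the bookkeeping: the nodes on the recorded forward path to $m$ must actually be closed with optimal $g$-values. This follows from the way $g_F$ is propagated only through expansions (lines~\ref{alg:f2a_check_succ_cost}--\ref{alg:f2a_update_succ_cost}), together with the fact that an optimal $g$-value implies optimal $g$-values for the recorded predecessors. The argument is independent of attractors: attractors affect only $h$, never $g$ or $U$. Once this is established, it contradicts the assumption that every optimal path is not found. Hence $U > C^*$, and the lemma follows.
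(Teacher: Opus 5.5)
Your proposal is correct and follows essentially the same route as the paper's proof. You rule out the empty-Open termination via Lemma~\ref{lemma:nFB_exist}, combine $L \le C^*$ (Theorem~\ref{theorem:L<=C*}) with $U \ge C^*$ to reduce to $U = C^*$, and then show that the meeting state's forward and backward generating paths consist of permanently closed nodes, so their concatenation is an optimal path that has been found; your justification of that last step (an optimal $g$-value at a node forces its generator to have had an optimal $g$-value when expanded, so the generator is permanently closed and stays so by Lemma~\ref{lemma:permanently_closed}) is slightly more explicit than the paper, which defers it to an omitted lemma.
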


\begin{proof}
Lemma \ref{lemma:nFB_exist} guarantees that $Open_F$ and $Open_B$ are both non-empty as long as there is any optimal path from $start$ to $goal$ that has not been found, so the termination condition for empty Open lists cannot be satisfied until all optimal paths from $start$ to $goal$ have been found. The only other termination condition is $U \le L$. Assume for the purpose of contradiction that this termination condition is satisfied before any optimal path from $start$ to $goal$ has been found. Theorem \ref{theorem:L<=C*} shows that $L \le C^*$ until all optimal paths from $start$ to $goal$ have been found, so for $U \le C^*$ to hold if no optimal paths from $start$ to $goal$ have been found, $U$ must be equal to $C^*$. We will now show that $U=C^*$ implies an optimal path from $start$ to $goal$ has been found, contradicting our assumption, thereby proving the lemma. $U$ is set in line \ref{alg:update_mincost}. On the iteration in which $U$ was set to $C^*$, there must have been a child node generated, $s'$, that satisfied the conditions of line \ref{alg:U_update_condition}, i.e., $s' \in Open_B$ and $g_F(s')+g_B(s')=C^*$. The latter implies $g_F(s')=c(start,s')$ and $g_B(s')=c(s',goal)$, i.e., $s'$ is on an optimal path from $start$ to $goal$ with optimal $g$-values in both directions. This means that all the nodes on the forward and backward generating paths for $s'$ are permanently closed\red{(Using a lemma that was omitted)}. The concatenation of these two paths, with $s'$ in between, is an optimal path from $start$ to $goal$ that was found on the iteration when $U$ was set to $C^*$.
\end{proof}

\begin{lemma} \label{lemma:set_U=C*}
If there exists a path from $start$ to $goal$, let $P=s_0,s_1,...s_n$ be the first optimal path from $start$ $(s_0)$ to $goal$ $(s_n)$ that is found during Alg. 1's execution, and let $n_F=s_i$ and $n_B=s_j$ be as defined in Lemma \ref{lemma:nFB} at the beginning of the iteration on which $P$ is found. Then during that iteration $U$ will be set to $C^*$ in line \ref{alg:update_mincost}.
\end{lemma}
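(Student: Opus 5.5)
We follow the structure of the corresponding lemma in \citet{holte2017mm}. At the beginning of the iteration on which $P$ is found, $P$ has not yet been found, so by Lemma~\ref{lemma:nFB_exist} the nodes $n_F=s_i$ and $n_B=s_j$ exist. They satisfy $i<j$, $s_i\in Open_F$ with $g_F(s_i)=c(start,s_i)$, and $s_j\in Open_B$ with $g_B(s_j)=c(s_j,goal)$. Moreover, $s_0,\dots,s_{i-1}$ are permanently closed in the forward direction, and $s_{j+1},\dots,s_n$ are permanently closed in the backward direction.

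At the end of this iteration $P$ has been found. Exactly one expansion occurs, say in direction $D$. By Lemma~\ref{lemma:permanently_closed}, the indices $i$ and $j$ of Definition~\ref{def:sij_appendix} can only change by the expansion of $n_F$ (if $D=F$) or of $n_B$ (if $D=B$). The first step is therefore to argue that the expanded node is $n_F$ or $n_B$, and that after the expansion $i\ge j$.

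Take $D=F$ and suppose $s_i$ is expanded; the case $D=B$ is symmetric. When $s_i$ is expanded, $g_F(s_i)$ is optimal. Because $P$ becomes found, the new index satisfies $i'\ge j$. Since each expansion closes only one node, this forces $i'=i+1$ and $j=i+1$, so the successor generated on $P$ is $s'=s_{i+1}=s_j$.

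We then trace lines~\ref{alg:f2a_check_succ_cost}--\ref{alg:update_mincost} for $s'$:
\begin{itemize}
\item $s'$ is generated with $g_F(s_j)=g_F(s_i)+c(s_i,s_j)=c(start,s_j)$. The check in line~\ref{alg:f2a_check_succ_cost} uses $\ge$, so the value is at least set to this optimum, and it cannot be lower.
\item Next we need $s_j\in Open_B$ at line~\ref{alg:U_update_condition}. This holds because $s_j=n_B\in Open_B$ at the start of the iteration, and a forward expansion never modifies $Open_B$. Also $g_B(s_j)=c(s_j,goal)$ still holds.
\item Line~\ref{alg:update_mincost} then sets $U\le g_F(s_j)+g_B(s_j)=c(start,s_j)+c(s_j,goal)=C^*$, because $s_j$ lies on the optimal path $P$.
\item Since every $U$ value corresponds to an actual path, $U\ge C^*$, so $U=C^*$.
\end{itemize}

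The main obstacle is the case analysis showing that the transition from ``not found'' to ``found'' forces $s_j$ to be exactly a successor of the expanded $s_i$ (or, symmetrically, $s_i$ to be a successor of the expanded $s_j$). It also requires care with the edge cases $i=j-1$ versus larger gaps, and with the possibility that the expanded node is some other node that already had an optimal $g$-value. I would handle this by comparing the indices $i,j$ before and after the expansion: only one node enters $Closed_D$ per iteration, so the index can advance past at most the node $s_i$ itself. Any other expanded node leaves $P$ unfound, which contradicts the choice of the iteration.
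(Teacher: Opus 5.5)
Your skeleton matches the paper's, and your bookkeeping for $s'=s_j$ is fine: one of $n_F,n_B$ must be expanded on this iteration, the other must be generated as its child, and line~\ref{alg:update_mincost} then fires with $g_F+g_B=C^*$. The gap is in the step you flag as the main obstacle. The inference ``each expansion closes only one node, so $i'=i+1$'' is invalid. The index $i$ of Definition~\ref{def:sij_appendix} is the first index past the \emph{contiguous} prefix of $P$ lying in $Closed_F$. So closing the single node $s_i$ can push it arbitrarily far if $s_{i+1},\dots,s_{j-1}$ were already in $Closed_F$ before this iteration (closed via some other path). Then $P$ becomes found with $i'\ge j$ even though $s_j$ is not a successor of $s_i$, and nothing you wrote rules this out. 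Your argument also never uses the hypothesis that $P$ is the \emph{first} optimal path found, and that hypothesis is exactly what excludes this case.

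The paper's argument runs as follows. Write $P=start\dots n_F,t_1,\dots,t_k,n_B\dots goal$ with $k\ge 1$. For $P$ to be found after the iteration, every $t_\ell$ must be in $Closed_F$. Since $t_1$ is generated from $n_F$ along an optimal path, it can stay in $Closed_F$ only if it already had its optimal $g_F$-value, i.e.\ it was previously reached by a different optimal path. Hence an optimal path from $start$ through all the $t_\ell$ to $n_B$ had already been found. Concatenating it with the backward segment $n_B\dots goal$ gives an optimal $start$--$goal$ path found before $P$, a contradiction.

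There is an alternative route under the literal non-strict test in line~\ref{alg:f2a_check_succ_cost}. There $s_{i+1}$ always passes the test when generated from $s_i$ with optimal $g_F(s_i)$, so it is removed from $Closed_F$ and reinserted into $Open_F$, which gives $i'=i+1$ directly. But you would have to argue this explicitly, and it sits uneasily with the strict-improvement assumption in the proof of Lemma~\ref{lemma:permanently_closed}. Either way, what you are missing is an argument about nodes of $P$ that were closed \emph{before} the current iteration, not a count of how many nodes this iteration closes.
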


\begin{proof}
Lemma \ref{lemma:no_termination_until_found} guarantees that $P$ exists, and Lemma \ref{lemma:nFB_exist} guarantees that $n_F$ and $n_B$ exist for $P$ at the beginning of the iteration on which it becomes found. One of them must be expanded on this iteration because $P$'s status will not change from "not found" to "found" if $n_F$ remains on $Open_F$ and $n_B$ remains on $Open_B$. We will complete the proof assuming that $n_F$ is expanded. The proof in the case that $n_B$ is expanded is analogous. We will first prove that when $n_F$ is expanded, $n_B$ will be generated as one of its children.

Suppose $n_B$ is not generated as a child of $n_F$ when it is expanded in the forward direction. Then there must exist one or more nodes between them, i.e., $P=start...n_F,t_1...t_k,n_b...goal$ $(k\ge 1)$. In order for $P$ to be "found" at the end of this iteration, it must be the case that $t_i \in Closed_F \forall 1 \le i \le k$. Since the path $start...n_F,t_1$ is an optimal path from $start$ to $t_1$, $t_1 \in Closed_F$ after being generated by $n_F$ means that it had previously been generated via a different optimal path, which implies an optimal path had previously been found from $start$ to all the $t_i$ and to $n_B$. Combining this previously found optimal path from $start$ to $n_B$ with the optimal path found by the backwards search from $n_B$ to $goal$ creates an optimal path from $start$ to $goal$ that had been found prior to $P$. This contradicts the premise that $P$ is the first optimal path from $start$ to $goal$.

Because $n_B$ is a child of $n_F$, $n_B \in Open_B$ and $g_F(n_B)+g_B(n_B) = c(start,n_B)+c(n_B,goal)=C^*$, so $U$ will be set to $C^*$ in line \ref{alg:update_mincost}.
\end{proof}

\begin{theorem}
If there exists a path from $start$ to $goal$, Alg. 1 will return $U=C^*$ (an optimal path) upon termination.
\end{theorem}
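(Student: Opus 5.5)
The proof combines Lemma~\ref{lemma:no_termination_until_found}, Lemma~\ref{lemma:set_U=C*}, and the fact that $U$ never drops below $C^*$. The plan is to show two things. First, $U\ge C^*$ at all times. Second, by the time the algorithm terminates, $U\le C^*$.

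For the first claim, note that $U$ changes only in line~\ref{alg:update_mincost}, where it becomes $\min(U,g_F(s')+g_B(s'))$ for some generated $s'$. I would argue by a simple invariant that every finite $g_F(s')$ is the cost of an actual path from $start$ to $s'$: it is set in line~\ref{alg:f2a_update_succ_cost} as $g_F(s)+c(s,s')$ along an edge. The same holds for $g_B(s')$ with paths from $s'$ to $goal$. The attractor bookkeeping never modifies $g$-values, so it cannot break this invariant. Concatenating the two paths gives a path from $start$ to $goal$ of cost $g_F(s')+g_B(s')\ge C^*$. Since $U$ starts at $\infty$, this yields $U\ge C^*$ throughout.

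For the second claim, assume a path exists. By Lemma~\ref{lemma:no_termination_until_found}, Alg.~\ref{alg:f2a} does not terminate before some optimal path has been found. Let $P$ be the first optimal path found. By Lemma~\ref{lemma:set_U=C*}, $U$ is set to $C^*$ during the iteration on which $P$ is found, and that iteration precedes termination. Since $U$ is only ever decreased via $\min$, we get $U\le C^*$ from then on, and combined with the first claim, $U=C^*$ at termination.

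It remains to check that termination actually happens, so that the statement is not vacuous. I would handle this by a finiteness argument. Each re-opening strictly decreases a $g$-value, and there are finitely many path costs in a finite graph, so the loop is finite. Once all optimal paths have been found, either the Open lists empty or $L$ eventually reaches $U$; in either case termination occurs with $U=C^*$. Termination follows the same pattern as in \citet{holte2017mm}, so I would mirror that argument rather than reprove it.

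The main obstacle is making sure the F2A heuristic and attractor updates interfere with neither the $U\ge C^*$ invariant nor the lemmas used. The invariant is unaffected because attractors only influence $h$ and hence the expansion order, not $g$-values. The lower-bound reasoning is already absorbed into Theorem~\ref{theorem:L<=C*} and the lemmas built on it. I would therefore state the proof as this short combination and cite those lemmas directly.
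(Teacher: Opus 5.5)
Your proof is correct and follows the paper's argument: the paper likewise gets $U\le C^*$ at termination from Lemma~\ref{lemma:no_termination_until_found} and Lemma~\ref{lemma:set_U=C*}, and combines it with $U\ge C^*$ ``by construction.'' Your observation that every finite $g$-value is the cost of an actual path makes that second step explicit. The termination paragraph is not needed, since the theorem only asserts the value of $U$ upon termination; it would also need more care than you give it, because the test on line~\ref{alg:f2a_check_succ_cost} uses $\ge$, so closed states are reopened on equal-cost paths without any $g$-value strictly decreasing.
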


\begin{proof}
Lemma \ref{lemma:no_termination_until_found} has shown that if there is a path from $start$ to $goal$, Alg. 1 will not terminate until at least one optimal path from $start$ to $goal$ has been found. And by Lemma \ref{lemma:set_U=C*} when the first optimal path is found $U$ will be set to $C^*$. Thus, when Alg. 1 terminates $U \le C^*$. Since $U \ge C^*$ by construction, when Alg. 1 terminates $U=C^*$.
\end{proof}

\end{document}